\DeclareMathOperator*{\argmin}{arg\,min}
\newtheorem{definition}{Definition}
\newtheorem{lemma}{Lemma}
\newtheorem{theorem}{Theorem}
\newtheorem{proposition}{Proposition}
\newtheorem{corollary}{Corollary}
\title{Representation Learning for Clustering: A Statistical Framework}
\author{} % LEAVE BLANK FOR ORIGINAL SUBMISSION.
\author{
{\bf Hassan Ashtiani} {\textnormal {and}} {\bf Shai Ben-David} \\
David R. Cheriton School of Computer Science \\
University of Waterloo,\\
Waterloo, Ontario, Canada\\
\{mhzokaei,shai\}@uwaterloo.ca \\
}
\begin{document}

\maketitle

\begin{abstract}
We address the problem of communicating domain knowledge from a user to the designer of a clustering algorithm.
We propose a protocol in which the user provides a clustering of a relatively small random sample of a data set. The algorithm designer then uses that sample to come up with a data representation under which $k$-means clustering results in a clustering (of the full data set) that is aligned with the user's clustering. We provide a formal statistical model for analyzing the sample complexity of learning a clustering representation with this paradigm.
We then introduce a notion of capacity of a class of possible representations, in the spirit of the VC-dimension, showing that classes of representations that have 
finite such dimension can be successfully learned with sample size error bounds, and end our discussion with an analysis of that dimension for classes of representations induced by linear embeddings.
\end{abstract}

\section{INTRODUCTION}

Clustering can be thought as the task of automatically dividing a set of objects into ``coherent'' subsets. This definition is not concrete, but its vagueness allows it to serve as an umbrella term for a wide diversity of algorithmic paradigms. Clustering algorithms are being routinely applied in a huge variety of fields.

Given a dataset that needs to be clustered for some application, one can choose among a variety of different clustering algorithms, along with different pre-processing techniques, that are likely to result in dramatically different answers. It is therefore critical to incorporate prior knowledge about the data and the intended semantics of the clustering into the process of picking a clustering algorithm (or, clustering model selection). Regretfully, there seem to be no systematic tool for incorporation of domain expertise for clustering model selection, and such decisions are usually being made in embarrassingly  \textit{ad hoc} ways. 
This paper aims to address that critical deficiency in a formal statistical framework. 

We approach the challenge by considering a scenario in which the domain expert (i.e., the intended user of the clustering) conveys her domain knowledge by providing a clustering of a small random subset of her data set. For example, consider a big customer service center that wishes to cluster incoming requests into groups to streamline their handling. Since the data base of requests is too large to be organized manually, the service center wishes to employ a clustering program. As the clustering designer, we would then ask the service center to pick a random sample of requests, manually cluster them, and show us the resulting grouping of that sample.  
The clustering tool then uses that sample clustering to pick a clustering method that, when applied to the full data set, will result in a clustering that follows the patterns demonstrated by that sample clustering. We address this paradigm from a statistical machine learning perspective. Aiming to achieve generalization guaranties for such an approach, it is essential to introduce some \emph{inductive bias}. We do that by restricting the clustering algorithm to a predetermined hypothesis class (or a set of concrete clustering algorithms).

 In a recent Dagstuhl workshop, \cite{blum2014Approximation} proposed to do that by fixing a clustering algorithm, say $k$-means, and searching for a metric over the data under which $k$-means optimization yields a clustering that agrees with the training sample clustering. One should note that, given any domain set $X$, for any $k$-partitioning $P$ of $X$, there exists some distance function $d_P$ over $X$ such that $P$ is the optimal $k$-means clustering solution to the input $(X, d_P)$\footnote{This property is sometimes called $k$-Richness}. Consequently, to protect against potential overfitting, the class of potential distance functions should be constrained.  In this paper, we provide (apparently the first) concrete formal framework for such a paradigm, as well as a generalization analysis of this approach. 

In this work we focus on center based clustering - an important class of clustering algorithms. In these algorithms, the goal is to find a set of ``centers'' (or prototypes), and the clusters are the Voronoi cells induced by this set of centers. The objective of such a clustering is to minimize the expected value of some monotonically increasing function of the distances of points to their cluster centers. The k--means clustering objective is arguably the most popular clustering paradigm in this class. Currently, center-based clustering tools lack a vehicle for incorporating domain expertise. Domain knowledge is usually taken into account only through an ad hoc choice of input data representation. Regretfully, it might not be realistic to require the domain expert to translate sufficiently elaborate task-relevant knowledge into hand-crafted features. 

As a model for learning representations, we assume that the user-desirable clustering can be approximated by first mapping the sample to some Euclidean (or Hilbert) space and then performing $k$-means clustering in the mapped space (or equivalently, replacing the input data metric by some kernel and performing center-based clustering with respect to that kernel). However, the clustering algorithm is supposed to learn a suitable mapping based on the given sample clustering.

The main question addressed in this work is that of the sample complexity: what is the size of a sample, to be clustered by the domain expert, that suffices for finding a close-to-optimal mapping (i.e., a mapping that generalizes well on the test data)? Intuitively, this sample complexity depends on the richness of the class of potential mappings that the algorithm is choosing from. In standard supervised learning, there are well established notions of capacity of hypothesis classes (e.g., VC-dimension) that characterize the sample complexity of learning. This paper aims to provide such relevant notions of capacity for clustering.

\subsection{Previous Work}

In practice, there are methods that use some forms of supervision for clustering. These methods are sometimes called ``semi-supervised clustering'' (\cite{basu2002semi,basu2004probabilistic, kulis2009semi}). The most common method to convey such supervision is through a set of pairwise \emph{must/cannot-link} constraints on the instances (\cite{wagstaff2001constrained}). A common way of using such information is by changing the objective of clustering so that violations of these constraints are penalized (\cite{demiriz1999semi,law2005model, basu2008constrained}). Another approach, which is closer to ours, keeps the clustering optimization objective fixed, and instead, searches for a metric that best fits given constraints. The metric is learned based on some objective function over metrics (\citep{xing2002distance, alipanahi2008distance}), so that pairs of instances marked \emph{must-link} will be close in the new metric space (and \emph{cannot-link} pairs be considered  as far apart). The two above approaches can also be integrated (\cite{bilenko2004integrating}). However, these objective functions are usually rather  ad hoc. In particular, it is not clear in what sense they are compatible with the adopted clustering algorithm (such as k-means clustering). 

A different approach to the problem of communicating user expertise for the purpose of choosing a clustering tool is discussed in \cite{ackerman2010towards}. They  considered a set of \textit{properties}, or \textit{requirements}, for clustering algorithms, and investigated which of those properties hold for various algorithms. The user can then pick the right algorithm based on the requirements that she wants the algorithm to meet. However, to turn such an approach into a practically useful tool, one will need to come up with properties that are relevant to the end user of clustering --a goal that is still far from being reached. 

Statistical convergence rates of sample clustering to the optimal clustering, with respect to some data generating probability distribution, play a central role in our analysis.
From that perspective, most relevant to our paper are results that provide generalization bounds for k-means clustering. \cite{ben2007framework} proposed the first dimension-independent generalization bound for k-means clustering based on compression techniques. \cite{biau2008performance} tightened this result by an analysis of Rademacher complexity. \cite{maurer2010dimensional} investigated a more general framework, in which generalization bounds for k-means as well as other algorithms can be obtained. It should be noted that these results are about the standard clustering setup (without any supervised feedback), where the data representation is fixed and known to the clustering algorithm.

\subsection{Contributions}

Our first contribution is to provide a statistical framework to analyze the problem of learning representation for clustering. We assume that the expert has some implicit target clustering of the dataset in his mind. The learner however, is unaware of it, and instead has to select a mapping among a set of potential mappings, under which the result of k-means clustering will be similar to the target partition. An appropriate notion of loss function is introduced to quantify the success of the learner. Then, we define the analogous notion of PAC-learnability\footnote{PAC stands for the well known notion of ``probably approximately correct'', popularized by \cite{valiant1984theory}.} for the problem of learning representation for clustering. 

The second contribution of the paper is the introduction of a combinatorial parameter, a specific notion of the capacity of the class of mappings, that determines the sample complexity of the clustering learning tasks. This combinatorial notion is a multivariate version of \emph{pseudo-dimension} of a class of real-valued mappings. We show that there is \emph{uniform convergence}  of empirical losses to the true loss, over any class of embeddings, ${\cal F}$, at a rate that is determined by the proposed dimension of that ${\cal F}$. 
This implies that any empirical risk minimization algorithm (ERM) will successfully learn such a class from sample sizes upper bounded by those rates. Finally, we analyze a particular natural class --the class of linear mappings from $\mathbb{R}^{d_2}$ to $\mathbb{R}^{d_1}$--  and show that a roughly speaking, sample size of $O(\frac{d_1d_2}{\epsilon^2})$ is sufficient to guarantee an $\epsilon$-optimal representation.

The rest of this paper is organized as follows: Section~\ref{sec:setting} defines the problem setting. Then in Section ~\ref{sec:analysis}, we investigate ERM-type algorithms and show that, ``uniform convergence'' is sufficient for them to work. Furthermore, this section presents the uniform convergence results and the proof of an upper bound for the sample complexity. Finally, we conclude in section~\ref{sec:conc} and provide some directions for future work.

%It is worthwhile to note that this framework shares some aspects with the supervised classification setting. However, the unsupervised loss function used in center-based clustering makes the problem fundamentally different. Therefore, new tools and methods are required for analyzing it. Yet, we will show that the general intuition that the sample complexity should depend on some notion of the capacity of the class of mappings still holds.

\section{PROBLEM SETTING}
\label{sec:setting}

\subsection{Preliminaries}

Let $X$ be a finite domain set. A \textit{$k$-clustering} of $X$ is a partition of $X$ into $k$ subsets. If $C$ is a $k$-clustering, we denote the subsets of the partition by $C_1,...,C_k$, therefore we have $C=\{C_1,..,C_k\}$. Let $\pi^k$ denote the set of all permutations over $[k]$ where $[k]$ denotes $\{1,2,...,k\}$. The clustering difference between two clusterings, $C^1$ and $C^2$, with respect to $X$ is defined by

\begin{equation}
\Delta_{X}(C^1, C^2) = \min_{\sigma \in \pi^k } \frac{1}{|X|} \sum_{i=1}^{k} |C^1_i \Delta C^2_{\sigma(i)}|
\end{equation}

where $|.|$ and $\Delta$ denote the cardinality and the symmetric difference of sets respectively. For a sample $S\subset X$, and $C^1$ (a partition of $X$), we define $C^1\Big|_S$ to be a partition of $S$ induced by $C^1$, namely $C^1\Big|_S = \{C^1_1\cap S,\ldots,C^1_k\cap S \}$. Accordingly, the sample-based difference between two partitions is defined by

\begin{equation}
\Delta_{S}(C^1, C^2) = \Delta_{S}(C^1\Big|_S, C^2\Big|_S)
\end{equation}

Let $f$ be a mapping from $X$ to $\mathbb{R}^{d}$, and $\mu=(\mu_1, \ldots \mu_k)$ be a vector of $k$ centers in $\mathbb{R}^d$. The clustering defined by $(f, \mu)$ is the partition over $X$ induced by the $\mu$-Voronoi partition in $\mathbb{R}^d$. Namely,
\[C_f(\mu)=(C_1, \ldots C_k), ~\mbox{where for all $i$}, ~ \]\[C_i=\{x\in X: \|f(x)-\mu_i\|_2 \leq \|f(x)-\mu_j\|_2 ~\mbox{for all} ~ j\neq i \}\]

The k-means cost of clustering $X$ with a set of centers $\mu=\{\mu_1,\ldots,\mu_k\}$ and with respect to a mapping $f$ is defined by

\begin{equation}
COST_{X}(f, \mu) = \frac{1}{|X|} \sum_{x \in X} \min_{\mu_i \in \mu} \|f(x)-\mu_i\|_2^2
\end{equation}

The k-means clustering algorithm finds the set of centers $\mu^f_{X}$ that minimize this cost\footnote{We assume that the solution to k-means clustering is unique. We will elaborate about this issue in the next sections.}. In other words,

\begin{equation}
\mu^f_{X} = \argmin_{\mu} COST_{X}(f,\mu)
\end{equation}

Also, for a partition $C$ and mapping $f$, we can define the cost of clustering as follows.

\begin{equation}
COST_{X}(f, C) = \frac{1}{|X|} \sum_{i \in [k]} \min_{\mu_j} \sum_{x \in C_i} \|f(x)-\mu_j\|_2^2
\end{equation}

For a mapping $f$ as above, let $C^f_{X}$ denote the $k$-means clustering of $X$ induced by $f$, namely  

\begin{equation}
C^f_{X}=C_f(\mu^f_{X})
\end{equation}

The difference between two mappings $f_1$ and $f_2$ with respect to $X$ is defined by the difference between the result of k-means clustering using these mappings. Formally,

\begin{equation}
\Delta_{X}(f_1, f_2) = \Delta_{X}(C^{f_1}_{X},  C^{f_2}_{X})
\end{equation}

The following proposition shows the ``$k$-richness'' property of k-means objective.

\begin{proposition} Let $X$ be a domain set. For every $k$-clustering of $X$, $C$, and every $d\in \mathbb{N}^+$, there exist a mapping $g: X \mapsto \mathbb{R}^d$ such that $C^g_X = C$. 
\end{proposition}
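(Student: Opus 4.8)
The plan is to exhibit a ``collapsing'' embedding that sends every cluster of $C$ to a single, well-separated point, so that $k$-means in the image space has no option but to recover $C$ exactly at zero cost. First I would fix any $k$ distinct points $p_1,\ldots,p_k \in \mathbb{R}^d$; since $d \geq 1$ this is always possible, e.g.\ let $p_i$ be the point whose first coordinate is $i$ and all other coordinates are $0$. I then define $g\colon X \to \mathbb{R}^d$ by setting $g(x) = p_i$ whenever $x \in C_i$, so that the image $g(X)$ is exactly the set $\{p_1,\ldots,p_k\}$.

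Next I would analyze $k$-means on this image. Placing the $k$ centers at $\mu = \{p_1,\ldots,p_k\}$ gives $COST_X(g,\mu) = 0$, which is trivially the global minimum of the nonnegative cost. I would then argue that this is the \emph{unique} minimizer up to relabeling: any center set achieving zero cost must contain each point of $g(X)$, and since there are exactly $k$ distinct points and only $k$ centers available, a pigeonhole argument forces the centers to be precisely $p_1,\ldots,p_k$ (if two points shared a center, some point would be left uncovered and incur strictly positive cost). Hence $\mu^g_X = \{p_1,\ldots,p_k\}$. Reading off the induced clustering is then immediate: in the Voronoi partition determined by these centers, a point $x \in C_i$ maps to $p_i$, whose nearest center is $p_i$ itself at distance $0$, strictly closer than every other center by the distinctness of the $p_j$. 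So $x$ is assigned to the cell of $\mu_i = p_i$, giving $C^g_X = C_g(\mu^g_X) = C$ as partitions of $X$ (and the permutation ambiguity in $\mu$ is harmless, since equality of partitions is what we need).

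The main obstacle is really the \emph{well-definedness} of $C^g_X$ rather than any hard computation: the notation $C^g_X$ presupposes a unique $k$-means optimum, so I must ensure the collapsing construction introduces no degeneracy. The sole delicate case is when some part $C_i$ is empty, for then $g(X)$ contains fewer than $k$ distinct points and a ``spare'' center may be placed anywhere, breaking uniqueness. I would handle this by adopting the standard convention that a $k$-clustering consists of $k$ nonempty parts (consistent with the uniqueness assumption already flagged in the footnote on the $k$-means solution), under which the $k$ image points are genuinely distinct. Modulo this bookkeeping, the only claims needing explicit verification are the uniqueness of the zero-cost center set and the absence of Voronoi ties, and both follow directly from the strict separation of the $p_i$.
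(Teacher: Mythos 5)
Your proposal is correct and follows essentially the same route as the paper's own proof, which likewise collapses each cluster $C_i$ to a single point so that $k$-means trivially recovers $C$. You simply make explicit the details the paper leaves implicit (distinctness of the image points, uniqueness of the zero-cost optimum, and the nonempty-cluster convention), which is a sound filling-in of the same argument.
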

\begin{proof} The mapping $g$ can be picked such that it collapses each cluster $C_i$ into a single point in $\mathbb{R}^n$ (and so the image of $X$ under mapping $g$ will be just $k$ single points in $\mathbb{R}^n$). The result of k-means clustering under such mapping will be $C$.
\end{proof}

In this paper, we investigate the \emph{transductive} setup, where there is a given data set, known to the learner, that needs to be clustered.
Clustering often occurs as a task over some data generating distribution
(e.g., \cite{von2005towards}). The current work can be readily extended to that setting. However, in that case, we assume that the clustering algorithm gets, on top of the clustered sample, a large unclustered sample drawn form that data generating distribution.

\subsection{Formal Problem Statement}

Let $C^*$ be the target $k$-clustering of $X$. %According to the $k$-richness property of k-means, we know that there exist a mapping $g^*$ under which the result of k-means clustering of $X$ is $C^*$. Therefore, instead of assuming a target clustering $C^*$, we can also work with its corresponding mapping $g^*$. 
A (supervised) \textit{representation learner} for clustering, takes as input a sample $S\subset X$ and its clustering, $C^*\Big|_S$, and outputs a mapping $f$ from a set of potential mappings $\mathcal{F}$. In the following, PAC stands for the notion of ``probably approximately correct''.

\begin{definition}{PAC Supervised Representation Learner for K-Means (PAC-SRLK)}

Let $\mathcal{F}$ be a set of mappings from $X$ to $\mathbb{R}^d$. A representation learning algorithm $A$ is a PAC-SRLK with sample complexity $m_{\mathcal{F}}:(0,1)^2\mapsto \mathbb{N}$ with respect to $\mathcal{F}$, if for every $(\epsilon,\delta)\in (0,1)^2$, every domain set $X$ and every clustering of $X$, $C^*$, the following holds: 

if $S$ is a randomly (uniformly) selected subset of $X$ of size at least $m_{\mathcal{F}}(\epsilon, \delta)$, then with probability at least $1-\delta$

\begin{equation}
\Delta_{X}(C^*,  C^{f_A}_X) \leq \inf_{f\in \mathcal{F}} \Delta_{X}(C^*,C^{f}_X) + \epsilon
\end{equation}

where $f_A = A(S, C^*\Big|_S)$, is the output of the algorithm.

\end{definition}

This can be regarded as a formal PAC framework to analyze the problem of learning representation for k-means clustering. The learner is compared to the best mapping in the class $\mathcal{F}$.

A natural question is providing bounds on the sample complexity of PAC-SRLK with respect to $\mathcal{F}$. Intuitively, for richer classes of mappings, we need larger clustered samples. Therefore, we need to introduce an appropriate notion of ``capacity'' for $\mathcal{F}$ and bound the sample complexity based on it. This is addressed in the next sections.

\section{ANALYSIS AND RESULTS}
\label{sec:analysis}

\subsection{Empirical Risk Minimization}

In order to prove an upper bound for the sample complexity of representation learning for clustering, we need to consider an algorithm, and prove a sample complexity bound for it. Here, we show that any ERM-type algorithm can be used for this purpose. Therefore, we will be able to prove an upper bound for the sample complexity of PAC-SRLK. 

Let $\mathcal{F}$ be a class of mappings and $X$ be the domain set. A TERM\footnote{TERM stands for Transductive Empirical Risk Minimizer} learner for $\mathcal{F}$ takes as input a sample $S\subset X$ and its clustering $Y$ and outputs:

\begin{equation}
A^{TERM}(S, Y) = \argmin_{f\in \mathcal{F}} \Delta_S(C^{f}_{X}\Big|_S, Y)
\end{equation}

Note that we call it transductive, because it is implicitly assumed that it has access to unlabeled dataset (i.e., $X$). A TERM algorithm goes over all mappings in $\mathcal{F}$ and selects the mapping which is the most consistent mapping with the given clustering: the mapping under which if we perform k-means clustering of $X$, the sample-based $\Delta$-difference between the result and $Y$ is minimized. 

Note that we are not studying this algorithm as a computational tool; we only use it to show an upper bound for the sample complexity.

%There are two main modules in TERM. The first one tests if the solution is unique for any $f\in \mathcal{F}$. In the transductive setting, it is assumed that the learner has access to the distribution $\mathcal{D}$\footnote{As described in Remark 1, transductive setting natural in practical scenarios. Moreover, having access to a large unclustered dataset can help testing the uniqueness assumption.}. Therefore, testing whether the solution is unique or not is possible by testing all possible solutions (i.e., centers) and checking whether $\eta$-uniqueness holds.

%The second module finds the best mapping based on a supervised criterion: the best mapping is the one that matches with the given clustering. For this part we need to prove that it will minimize not only the empirical loss function, but also the true loss with respect to the distribution and the true mapping. In the next section, we formalize this idea. Note that the computational complexity of this algorithm is not the focus of this paper. 

Intuitively, this algorithm will work well when the empirical $\Delta$-difference and the true $\Delta$-difference of the mappings in the class are close to each other. In this case, by minimizing the empirical difference, the algorithm will automatically minimize the true difference as well. In order to formalize this idea, we define the notion of ``representativeness'' of a sample.

%In the previous section we introduced TERM algorithm and observed that it minimizes the empirical loss function. In this section, we formally show that if the algorithm is fed with a sample that is ``representative'', then it will work satisfactorily.

\begin{definition} {($\epsilon$-Representative Sample)} Let $\mathcal{F}$ be a class of mappings from $X$ to $\mathbb{R}^d$. A sample $S$ is $\epsilon$-representative with respect to $\mathcal{F}$, $X$ and the clustering $C^*$, if for every $f\in \mathcal{F}$ the following holds

\begin{equation}
|\Delta_{X}(C^*, C^{f}_X) - \Delta_{S}(C^*, C^{f}_X))| \leq \epsilon
\end{equation}

\end{definition}

%Let $C^*$ be the true $k$-clustering of $X$. According to the $k$-richness property of k-means, we know that there exist a mapping $g^*$ under which the result of k-means clustering of $X$ is $C^*$. Therefore, instead of assuming a true clustering $C^*$, we can also work with its corresponding mapping $g^*$. 
The following theorem shows that for the TERM algorithm to work, it is sufficient to supply it with a representative sample. 

\begin{theorem} {(Sufficiency of Uniform Convergence)} Let $\mathcal{F}$ be a set of mappings from $X$ to $\mathbb{R}^d$. If $S$ is an $\frac{\epsilon}{2}$-representative sample with respect to $X$, $\mathcal{F}$ and $C^*$ then 

\begin{equation}
\Delta_X(C^*, C^{\hat{f}}_X) \leq \Delta_X(C^*, C^{f^*}_X) + \epsilon
\end{equation}

where $f^* = \argmin_{f\in \mathcal{F}} \Delta_X(C^*, C^f_X)$ and $\hat{f} = A^{TERM}(S,C^*\Big|_S)$.

\end{theorem}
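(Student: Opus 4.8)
The plan is to run the standard empirical-risk-minimization generalization argument, chaining the $\frac{\epsilon}{2}$-representativeness hypothesis with the optimality of the empirical minimizer $\hat{f}$. The whole thing is a three-inequality sandwich, so I would not introduce any new machinery beyond the two definitions already in hand (representativeness and the TERM objective).

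First I would record exactly what the $\frac{\epsilon}{2}$-representativeness of $S$ buys me. Since it is a \emph{uniform} statement over all of $\mathcal{F}$, I am free to instantiate it at whichever two mappings I need. Applied to $\hat{f}\in\mathcal{F}$ it yields $\Delta_X(C^*, C^{\hat{f}}_X) \le \Delta_S(C^*, C^{\hat{f}}_X) + \frac{\epsilon}{2}$, and applied to $f^*$ it yields $\Delta_S(C^*, C^{f^*}_X) \le \Delta_X(C^*, C^{f^*}_X) + \frac{\epsilon}{2}$. These are the two outer links of the chain.

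The middle link is where I have to be careful about matching definitions, and it is the only step that is not completely mechanical. By construction $\hat{f} = A^{TERM}(S, C^*|_S) = \argmin_{f\in\mathcal{F}} \Delta_S(C^{f}_X|_S,\, C^*|_S)$. Unwinding the definition $\Delta_S(C^1,C^2) = \Delta_S(C^1|_S, C^2|_S)$ together with the symmetry of the set-difference score $\Delta$ (a minimum over permutations of symmetric differences is symmetric in its two arguments), this objective is precisely $\Delta_S(C^*, C^{f}_X)$. Hence $\hat{f}$ minimizes the sample difference over $\mathcal{F}$, so in particular $\Delta_S(C^*, C^{\hat{f}}_X) \le \Delta_S(C^*, C^{f^*}_X)$ because $f^*\in\mathcal{F}$ is one of the competitors.

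Putting the three inequalities together gives $\Delta_X(C^*, C^{\hat{f}}_X) \le \Delta_S(C^*, C^{\hat{f}}_X) + \frac{\epsilon}{2} \le \Delta_S(C^*, C^{f^*}_X) + \frac{\epsilon}{2} \le \Delta_X(C^*, C^{f^*}_X) + \epsilon$, which is exactly the claim. I expect no genuine obstacle in the analysis itself; the one point that deserves explicit verification is that the transductive ERM is minimizing the very same sample quantity $\Delta_S(C^*, C^{f}_X)$ that the representativeness definition controls, i.e. that the $\argmin$ defining $\hat{f}$, the symmetry of $\Delta$, and the uniform-convergence bound all line up on the same object. Once that bookkeeping is done, everything else is the textbook ERM chaining and costs nothing.
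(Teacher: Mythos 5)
Your proof is correct and follows exactly the same three-inequality ERM chaining argument as the paper: representativeness applied to $\hat{f}$, empirical optimality of $\hat{f}$ over $f^*$, and representativeness applied to $f^*$. Your explicit check that the TERM objective $\Delta_S(C^{f}_X\big|_S, C^*\big|_S)$ coincides (via symmetry of $\Delta$) with the quantity $\Delta_S(C^*, C^{f}_X)$ controlled by representativeness is a point the paper silently assumes, so it is a welcome addition rather than a deviation.
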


\begin{proof}

Using $\frac{\epsilon}{2}$-representativeness of $S$ and the fact that $\hat{f}$ is the empirical minimizer of the loss function, we have

\begin{equation}
\Delta_X(C^*, C^{\hat{f}}_X) \leq \Delta_S(C^*, C^{\hat{f}}_X) + \frac{\epsilon}{2}
\end{equation}

\begin{equation}
\leq \Delta_S(C^*, C^{f^*}_X) + \frac{\epsilon}{2} 
\end{equation}

\begin{equation}
\leq \Delta_X(C^*, C^{f^*}_X) + \frac{\epsilon}{2}  + \frac{\epsilon}{2}
\end{equation}

\begin{equation}
\leq \Delta_X(C^*, C^{f^*}_X) + \epsilon
\end{equation}

\end{proof}

Therefore, we just need to provide an upper bound for the sample complexity of uniform convergence: ``how many instances do we need to make sure that with high probability our sample is $\epsilon$-representative?''

\subsection{Classes of Mappings with a Uniqueness Property}

In general, the solution to k-means clustering may not be unique. Therefore, the learner may end up with finding a mapping that corresponds to multiple different clusterings. This is not desirable, because in this case, the output of the learner will not be interpretable. Therefore, it is reasonable to choose the class of potential mappings in a way that it includes only the mappings under which the solution is unique. 

In order to make this idea concrete, we need to define an appropriate notion of uniqueness. We use a notion similar to the one introduced by \cite{balcan2009approximate} with a slight modification\footnote{Our notion is additive in both parameters rather than multiplicative}.

\begin{definition} {($(\eta, \epsilon)$-Uniqueness)} We say that k-means clustering for domain $X$ under mapping $f:\mathcal{X}\mapsto \mathbb{R}^d$ has a $(\eta, \epsilon)$-unique solution, if every $\eta$-optimal solution of the k-means cost is $\epsilon$-close to the optimal solution. Formally, the solution is $(\eta, \epsilon)$-unique if for every partition $P$ that satisfies

\begin{equation}
COST_{X}(f, P) < COST_{X}(f, C^{f}_X) + \eta
\end{equation}

would also satisfy

\begin{equation}
\Delta_{X}(C^{f}_X, P ) < \epsilon
\end{equation}

In the degenerate case where the optimal solution to k-means is not unique itself (and so $C^{f}_{X}$ is not well-defined), we say that the solution is not $(\eta, \epsilon)$-unique.

\end{definition}

It can be noted that the definition of $(\eta, \epsilon)$-uniqueness not only requires the optimal solution to k-means clustering to be unique, but also all the ``near-optimal'' minimizers of the k-means clustering cost should be ``similar''. This is a natural strengthening of the uniqueness condition, to guard against cases where there are $\eta_0$-optimizers of the cost function (for arbitrarily small $\eta_0$) with totally different solutions.

Now that we have a definition for uniqueness, we can define the set of mappings for $X$ under which the solution is unique. We say that a class of mappings $F$ has $(\eta, \epsilon)$-uniqueness property with respect to $X$, if every mapping in $F$ has $(\eta, \epsilon)$-uniqueness property over $X$.

Note that given an arbitrary class of mappings $F$, we can find a subset of it that satisfies $(\eta, \epsilon)$-uniqueness property over $X$. Also, as argued above, this subset is the useful subset to work with. Therefore, in the rest of the paper, we investigate learning for classes with $(\eta, \epsilon)$-uniqueness property. In the next section, we prove uniform convergence results for such classes.

\subsection{Uniform Convergence Results}

In Section 3.1, we defined the notion of $\epsilon$-representative samples. Also, we proved that if a TERM algorithm is fed with such a representative sample, it will work satisfactorily. The most technical part of the proof is then about the question ``how large should be the sample in order to make sure that with high probability it is actually a representative sample?''

In order to formalize this notion, let $\mathcal{F}$ be a set of mappings from a domain $X$ to $(0,1)^{n}$\footnote{In the analysis, for simplicity, we will assume that the set of mappings is a function to the bounded space ${(0,1)}^{n}$ wherever needed}. Define the sample complexity of uniform convergence, $m^{UC}_{\mathcal{F}}(\epsilon, \delta)$, as the minimum number $m$ such that for every fixed partition $C^*$, if $S$ is a randomly (uniformly) selected subset of $X$ with size $m$, then with probability at least $1-\delta$, for all $f\in \mathcal{F}$ we have

\begin{equation}
|\Delta_{X}(C^{*}, C^{f}_{X}) - \Delta_{S}(C^{*}, C^{f}_{X})| \leq \epsilon
\end{equation}

The technical part of this paper is devoted to provide an upper bound for this sample complexity.

\subsubsection{Preliminaries}

\begin{definition} {($\epsilon$-cover and covering number)} Let $\mathcal{F}$ be a set of mappings from $X$ to $(0,1)^{n}$. A subset $\hat{F}\subset \mathcal{F}$ is called an $\epsilon$-cover for $\mathcal{F}$ with respect to the metric $d(.,.)$ if for every $f\in \mathcal{F}$ there exists $\hat{f}\in \hat{F}$ such that $d(f, \hat{f}) \leq \epsilon$. The covering number, $\mathcal{N}(\mathcal{F}, d, \epsilon)$ is the size of the smallest $\epsilon$-cover of $\mathcal{F}$ with respect to $d$. 

\end{definition}

In the above definition, we did not specify the metric $d$. In our analysis, we are interested in the $L_1$ distance with respect to $X$, namely:

%the sample-based $l_\infty$ metric. Let $S\subset X$ be a sample. Then the sample-based $l_\infty$ is defined as

%\begin{equation}
%d_{\infty}^{S} (f_1, f_2) = \max_{x\in S}\|f_1(x) - f_2(x)\|_2
%\end{equation}

\begin{equation}
d_{L_1}^{X} (f_1, f_2) = \frac{1}{|X|}\sum_{x\in X} \|f_1(x) - f_2(x)\|_2
\end{equation}

Note that the mappings we consider are not real-valued functions, but their output is an $n$-dimensional vector. This is in contrast to the usual analysis used for learning real-valued functions. If $f_1$ and $f_2$ are real-valued, then $L_1$ distance is defined by

\begin{equation}
d_{L_1}^{X} (f_1, f_2) = \frac{1}{|X|}\sum_{x\in X} |f_1(x) - f_2(x)|
\end{equation}

We will prove sample complexity bounds for our problem based on the $L_1$-covering number of the set of mappings. However, it will be beneficial to have a bound based on some notion of capacity, similar to VC-dimension, as well. This will help in better understanding and easier analysis of sample complexity of different classes. While VC-dimension is defined for binary valued functions, we need a similar notion for functions with outputs in $\mathbb{R}^n$. For real-valued functions, we have such notion, called pseudo-dimension (\cite{pollard1984convergence}).

\begin{definition} (Pseudo-Dimension)
Let $\mathcal{F}$ be a set of functions from $X$ to $\mathbb{R}$. Let $S=\{x_1,x_2,\ldots,x_m\}$ be a subset of $X$. Then $S$ is pseudo-shattered by $\mathcal{F}$ if there are real numbers $r_1,r_2,\ldots,r_m$ such that for every $b\in \{0,1\}^m$, there is a function $f_b\in \mathcal{F}$ with $sgn(f_b(x_i)-r_i)=b_i$ for $i\in[m]$. Pseudo dimension of $\mathcal{F}$, called $Pdim(\mathcal{F})$, is the size of the largest shattered set.

\end{definition}

It can be shown (e.g., Theorem 18.4. in \cite{anthony2009neural}) that for a real-valued class $F$, if $Pdim(F)\leq q$ then $\log \mathcal{N}(F, d_{L_1}^X,\epsilon) = \mathcal{O}(q)$ where $\mathcal{O}()$ hides logarithmic factors of $\frac{1}{\epsilon}$. In the next sections, we will generalize this notion to $\mathbb{R}^n$-valued functions.

\subsubsection{Reduction to Binary Hypothesis Classes}

Let $f_1,f_2\in\mathcal{F}$ be two mappings and $\sigma$ be a permutation over $[k]$. Define the binary-valued function $h_{\sigma}^{f_1,f_2}(.)$ as follows

\begin{equation}
h_{\sigma}^{f_1,f_2}(x) = \left\{
	\begin{array}{ll}
		1 & x\in \cup_{i=1}^{k} (C^{f_1}_i \Delta C^{f_2}_{\sigma(i)}) \\
		0 & \mbox{otherwise}
	\end{array}
\right.
\end{equation}

Let $H^{\mathcal{F}}_\sigma$ be the set of all such functions with respect to $\mathcal{F}$ and $\sigma$:

\begin{equation}
H^{\mathcal{F}}_\sigma = \{ h_{\sigma}^{f_1,f_2}(.):f_1,f_2\in \mathcal{F} \}
\end{equation}

Finally, let $H^{\mathcal{F}}$ be the union of all $H^{\mathcal{F}}_\sigma$ over all choices of $\sigma$. Formally, if $\pi$ is the set of all permutations over $[k]$, then

\begin{equation}
H^{\mathcal{F}} = \cup_{\sigma \in \pi} H^{\mathcal{F}}_\sigma
\end{equation}

For a set $S$, and a binary function $h(.)$, let $h(S) = \frac{1}{|S|}\sum_{x\in S} h(x)$. We now show that a uniform convergence result with respect to $H^{\mathcal{F}}$ is sufficient to have uniform convergence for the $\Delta$-difference function. Therefore, we will be able to investigate conditions for uniform convergence of $H^{\mathcal{F}}$ rather than the $\Delta$-difference function.

\begin{theorem}
Let $X$ be a domain set, $\mathcal{F}$ be a set of mappings, and $H^{\mathcal{F}}$ be defined as above. If $S\subset X$ is such that 

\begin{equation}
\forall h \in H^{\mathcal{F}}, |h(S) - h(X)| \leq \epsilon
\end{equation}

then $S$ will be $\epsilon$-representative with respect to $\mathcal{F}$, i.e., for all $f_1,f_2 \in \mathcal{F}$ we will have
\begin{equation}
|\Delta_{X}(C^{f_1}_X, C^{f_2}_X) - \Delta_{S}(C^{f_1}_X, C^{f_2}_X)| \leq \epsilon
\end{equation}

\end{theorem}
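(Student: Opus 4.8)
The plan is to reduce the normalized symmetric-difference count defining $\Delta$ to the empirical averages of the binary functions collected in $H^{\mathcal{F}}$, whose fluctuation between $S$ and $X$ is exactly what the hypothesis controls. Fix $f_1,f_2\in\mathcal{F}$ and a permutation $\sigma$, and write $C^{f_1}_1,\dots,C^{f_1}_k$ and $C^{f_2}_1,\dots,C^{f_2}_k$ for the clusters of $C^{f_1}_X$ and $C^{f_2}_X$. The crucial observation is a pointwise counting identity. Since $C^{f_1}_X$ and $C^{f_2}_X$ are partitions, each $x\in X$ lies in exactly one cluster of each, say $C^{f_1}_a$ and $C^{f_2}_b$. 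Then $x\in C^{f_1}_i\Delta C^{f_2}_{\sigma(i)}$ holds for exactly zero indices $i$ when $\sigma(a)=b$, and for exactly two indices (namely $i=a$ and $i=\sigma^{-1}(b)$) when $\sigma(a)\ne b$.

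Summing over $i$ gives $\sum_{i=1}^k \mathbf{1}\{x\in C^{f_1}_i\Delta C^{f_2}_{\sigma(i)}\} = 2\,h_{\sigma}^{f_1,f_2}(x)$, since the binary function $h_{\sigma}^{f_1,f_2}$ is precisely the indicator of $\sigma(a)\ne b$. Exchanging the order of summation then yields $\frac{1}{|X|}\sum_{i=1}^k |C^{f_1}_i\Delta C^{f_2}_{\sigma(i)}| = 2\,h_{\sigma}^{f_1,f_2}(X)$. For the sample I would use the set identity $(A\cap S)\Delta(B\cap S)=(A\Delta B)\cap S$, which shows that restricting each clustering to $S$ and re-running the identical count produces $\frac{1}{|S|}\sum_{i=1}^k |(C^{f_1}_i\cap S)\Delta(C^{f_2}_{\sigma(i)}\cap S)| = 2\,h_{\sigma}^{f_1,f_2}(S)$. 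Taking the minimum over $\sigma$, these two identities read $\Delta_X(C^{f_1}_X,C^{f_2}_X)=2\min_\sigma h_{\sigma}^{f_1,f_2}(X)$ and $\Delta_S(C^{f_1}_X,C^{f_2}_X)=2\min_\sigma h_{\sigma}^{f_1,f_2}(S)$.

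Finally I would invoke the elementary stability of the minimum: for any two families $\{a_\sigma\}$ and $\{b_\sigma\}$ indexed by the same finite set, $|\min_\sigma a_\sigma-\min_\sigma b_\sigma|\le\max_\sigma|a_\sigma-b_\sigma|$. Because every $h_{\sigma}^{f_1,f_2}$ belongs to $H^{\mathcal{F}}$, the hypothesis gives $|h_{\sigma}^{f_1,f_2}(S)-h_{\sigma}^{f_1,f_2}(X)|\le\epsilon$ for each $\sigma$, whence $|\min_\sigma h_{\sigma}^{f_1,f_2}(X)-\min_\sigma h_{\sigma}^{f_1,f_2}(S)|\le\epsilon$, and combining with the two displayed identities yields the claimed bound on $|\Delta_X(C^{f_1}_X,C^{f_2}_X)-\Delta_S(C^{f_1}_X,C^{f_2}_X)|$.

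I expect the only real work to be the counting identity of the first paragraph, namely verifying that every point contributes either $0$ or exactly $2$ to the summed symmetric-difference cardinalities and that this contribution coincides with twice the indicator defining $h_{\sigma}^{f_1,f_2}$; everything else is bookkeeping plus the one-line minimum-stability lemma, and no appeal to $(\eta,\epsilon)$-uniqueness is needed since the argument is purely combinatorial about two fixed partitions. The one constant to watch is the factor of $2$ introduced by the double counting inherent in summing $|C^{f_1}_i\Delta C^{f_2}_{\sigma(i)}|$: the clean reduction gives a bound of $2\epsilon$, so to land on the stated $\epsilon$ one should either fold this factor into the definition of $\epsilon$-representativeness or apply the uniform-convergence hypothesis with $\epsilon/2$ (equivalently, adopt the normalization of $\Delta$ that carries an extra $\tfrac12$, under which the factor disappears and the bound is exactly $\epsilon$).
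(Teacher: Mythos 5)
Your proof follows essentially the same route as the paper's: rewrite $\Delta_X$ and $\Delta_S$ as minima over permutations of normalized sums of the binary functions $h^{f_1,f_2}_{\sigma}$, then apply the stability of the minimum, $|\min_\sigma a_\sigma - \min_\sigma b_\sigma| \le \max_\sigma |a_\sigma - b_\sigma|$, together with the hypothesis on $H^{\mathcal{F}}$. The difference is that you execute the counting step carefully, and in doing so you catch something the paper's proof glosses over: since each ``misclassified'' point lies in exactly two of the sets $C^{f_1}_i \Delta C^{f_2}_{\sigma(i)}$ (as you verify), one has
\[
\sum_{i=1}^{k} \bigl|C^{f_1}_i \Delta C^{f_2}_{\sigma(i)}\bigr| \;=\; 2\sum_{x\in X} h^{f_1,f_2}_{\sigma}(x),
\]
whereas the paper's first displayed equality asserts this identity \emph{without} the factor $2$, implicitly equating the sum of the cardinalities of the symmetric differences with the cardinality of their union. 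Your accounting is the correct one (a two-point example with clusterings $\{\{x\},\{y\}\}$ and $\{\{x,y\},\emptyset\}$ already shows the paper's claimed equality fails), and your conclusion is the honest consequence: the hypothesis $|h(S)-h(X)|\le\epsilon$ for all $h\in H^{\mathcal{F}}$ yields $|\Delta_X - \Delta_S| \le 2\epsilon$, so the theorem as stated requires either invoking the hypothesis at level $\epsilon/2$ or absorbing the constant into the definition, exactly as you say. This factor of $2$ is harmless for the paper's downstream sample-complexity results, since those hide constants inside $\mathcal{O}(\cdot)$, but at the level of this statement your proof is sound where the paper's displayed equality is not.
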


\begin{proof}

\begin{equation}
|\Delta_S(C^{f_1}_X,C^{f_2}_X) - \Delta_X(C^{f_1}_X,C^{f_2}_X)|
\end{equation}

\begin{equation}
=\left|\left(\min_\sigma \frac{1}{|S|} \sum_{x\in S} h^{f_1,f_2}_{\sigma}\right) - \left(\min_\sigma \frac{1}{|X|} \sum_{x\in X} h^{f_1,f_2}_{\sigma}\right)\right|
\end{equation}

\begin{equation}
\leq \left| \max_\sigma \left(\frac{1}{|S|} \sum_{x\in S} h^{f_1,f_2}_{\sigma} - \frac{1}{|X|} \sum_{x\in X} h^{f_1,f_2}_{\sigma}\right)\right|
\end{equation}

\begin{equation}
\leq \left| \max_\sigma \left( h^{f_1,f_2}_{\sigma}(S) -  h^{f_1,f_2}_{\sigma}(X)  \right)\right| \leq \epsilon
\end{equation}

\end{proof}

The fact that  $H^{\mathcal{F}}$ is a class of binary-valued functions enables us to provide sample complexity bounds based on VC-dimension of this class. However, providing bounds based on VC-Dim$(H^{\mathcal{F}})$ is not sufficient, in the sense that it is not convenient to work with the class $H^{\mathcal{F}}$. Instead, it will be nice if we can prove bounds directly based on the capacity of the class of mappings, $\mathcal{F}$. In the next section, we address this issue.

\subsubsection{$L_1$-Covering Number and Uniform Convergence}

The classes introduced in the previous section, $H^{\mathcal{F}}$ and $H^{\mathcal{F}}_\sigma$, are binary hypothesis classes. Also, we have shown that proving a uniform convergence result for $H^{\mathcal{F}}$ is sufficient for our purpose. In this section, we show that a bound on the $L_1$ covering number of $\mathcal{F}$ is sufficient to prove uniform convergence for $H^{\mathcal{F}}$.

In Section 3.2, we argued that we only care about the classes that have $(\eta, \epsilon)$-uniqueness property. In the rest of this section, assume that $\mathcal{F}$ is a class of mappings from $X$ to $(0,1)^n$ that satisfies $(\eta, \epsilon)$-uniqueness property.

\begin{lemma}
Let $f_1,f_2\in \mathcal{F}$. If $d_{L_1}(f_1,f_2) < \frac{\eta}{12}$ then $\Delta_X(f_1, f_2) < 2\epsilon$
\end{lemma}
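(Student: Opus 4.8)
The plan is to exploit the $(\eta,\epsilon)$-uniqueness hypothesis: I would show that $L_1$-closeness of the two mappings forces their $k$-means cost functions to be uniformly close, so that the optimal clustering of one mapping is a \emph{near-optimal} clustering of the other, and then let uniqueness pin the two clusterings together in $\Delta$-distance. Throughout, write $\mu^{f_1}_X,\mu^{f_2}_X$ for the optimal center sets, $C^{f_1}_X,C^{f_2}_X$ for the induced clusterings, and let $OPT_i = COST_X(f_i,\mu^{f_i}_X)$ denote the two optimal costs.

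The technical heart is a Lipschitz-type estimate: for every fixed center set $\mu$, $|COST_X(f_1,\mu)-COST_X(f_2,\mu)| \le c\, d_{L_1}(f_1,f_2)$ for an absolute constant $c$ coming from the (bounded) output region $(0,1)^n$. To see this, fix $x$ and a center $\mu_i$ and use $\|f_1(x)-\mu_i\|^2 - \|f_2(x)-\mu_i\|^2 = (\|f_1(x)-\mu_i\|-\|f_2(x)-\mu_i\|)(\|f_1(x)-\mu_i\|+\|f_2(x)-\mu_i\|)$: the first factor is at most $\|f_1(x)-f_2(x)\|$ by the reverse triangle inequality, and the second is bounded because both $f_j(x)$ and the centroid-type centers lie in a bounded set. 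The inner minimization is handled by the standard observation that the index minimizing one mapping's distance is a feasible (hence upper-bounding) choice for the other, so $|\min_i\|f_1(x)-\mu_i\|^2 - \min_i\|f_2(x)-\mu_i\|^2| \le \max_i|\,\|f_1(x)-\mu_i\|^2-\|f_2(x)-\mu_i\|^2|$; averaging over $x\in X$ gives the claim. I expect this estimate, together with the careful tracking of its constant $c$, to be the main obstacle, since this is where the squared distances, the inner minimization, and the diameter of the output region all interact.

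With the estimate in hand I would chain inequalities. First, optimality of each center set yields $|OPT_1-OPT_2|\le c\, d_{L_1}(f_1,f_2)$, via $OPT_2 \le COST_X(f_2,\mu^{f_1}_X) \le OPT_1 + c\, d_{L_1}$ and the symmetric bound. Second, since $C^{f_1}_X$ is exactly the Voronoi partition of $\mu^{f_1}_X$ under $f_1$, evaluating $C^{f_1}_X$ with $f_2$ and the centers $\mu^{f_1}_X$ upper-bounds its optimal (centroid) cost, which gives $COST_X(f_2,C^{f_1}_X)\le OPT_1 + c\, d_{L_1}$. Combining the two, $COST_X(f_2,C^{f_1}_X)-COST_X(f_2,C^{f_2}_X) = COST_X(f_2,C^{f_1}_X)-OPT_2 \le 2c\, d_{L_1}(f_1,f_2)$, which is strictly below $\eta$ once $d_{L_1}(f_1,f_2)<\eta/12$; this is where the constant $12$ is calibrated against $c$.

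Finally, $C^{f_1}_X$ is then an $\eta$-optimal solution of the $k$-means cost under $f_2$, so the $(\eta,\epsilon)$-uniqueness of $f_2\in\mathcal{F}$ forces $\Delta_X(C^{f_2}_X, C^{f_1}_X)<\epsilon$, i.e. $\Delta_X(f_1,f_2)<\epsilon\le 2\epsilon$. A symmetric application through $f_1$ gives the same conclusion, and the slack to $2\epsilon$ in the statement leaves room for looser constant bookkeeping in the Lipschitz step.
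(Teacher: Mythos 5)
Your proposal is correct, and it takes a genuinely different route from the paper's appendix proof, even though the technical core --- the cost-stability estimate $|COST_X(f_1,\mu)-COST_X(f_2,\mu)| \le c\, d_{L_1}(f_1,f_2)$ for bounded ranges and centers --- coincides with the paper's Lemma 3. The paper decomposes $\Delta_X(f_1,f_2)$ by the triangle inequality through a hybrid clustering $C^{f_1}(\mu^{f_2})$ (the Voronoi partition of $f_2$'s optimal centers under the $f_1$ embedding), and bounds each of the two resulting terms by $\epsilon$, invoking the $(\eta,\epsilon)$-uniqueness of $f_1$ for one term and of $f_2$ for the other; that double application is exactly where its $2\epsilon$ comes from. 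You never form a hybrid clustering: you show directly that the partition $C^{f_1}_X$ itself is an $\eta$-optimal solution of the $f_2$-cost, via $COST_X(f_2,C^{f_1}_X)\le OPT_1 + c\,d_{L_1} \le OPT_2 + 2c\,d_{L_1} < OPT_2+\eta$, and invoke uniqueness only once, for $f_2$. This buys three things: you avoid the triangle inequality for $\Delta_X$ (which the paper uses without justification, and which needs a small argument since $\Delta_X$ is a minimum over permutations); your argument is partition-centric, matching the way $(\eta,\epsilon)$-uniqueness is actually stated; and you get the stronger conclusion $\Delta_X(f_1,f_2)<\epsilon$, so the $2\epsilon$ in the lemma is pure slack. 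Two cautions for a full write-up. First, your step $COST_X(f_2,C^{f_1}_X)\le OPT_1+c\,d_{L_1}$ is not literally an instance of the fixed-center Lipschitz estimate: it compares the costs, under the two embeddings, of one \emph{fixed assignment} (each point charged to the center of its $C^{f_1}_X$-cell), and uses that this fixed-assignment cost equals $OPT_1$ under $f_1$ while upper-bounding the partition cost $COST_X(f_2,C^{f_1}_X)$ under $f_2$. Your difference-of-squares bound delivers exactly this (and the paper's own appendix performs the same expansion in its second-term computation), but it should be stated as a separate, easier variant of the estimate rather than cited as the same lemma. Second, like the paper, you treat the constant from the bounded range $(0,1)^n$ as absolute, whereas the Euclidean diameter of $(0,1)^n$ is $\sqrt{n}$; the calibration of the constant $12$ is therefore only valid under the paper's own dimension-free accounting, a flaw you inherit rather than introduce.
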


We leave the proof of this lemma for the appendix, and present the next lemma.

\begin{lemma}
Let $H^{\mathcal{F}}$ be defined as in the previous section. Then,

\begin{equation}
\mathcal{N}(H^{\mathcal{F}}, d_{L_1}^{X}, 2\epsilon) \leq k!\mathcal{N}(\mathcal{F}, d_{L_1}^{X}, \frac{\eta}{12})
\end{equation}
\end{lemma}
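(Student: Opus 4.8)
The plan is to build an explicit $2\epsilon$-cover of $H^{\mathcal{F}}$ directly out of a minimal $\tfrac{\eta}{12}$-cover $\hat{F}$ of $\mathcal{F}$, using the previous lemma to control how far the induced binary functions move when a mapping is perturbed in $d_{L_1}^{X}$. Since uniform convergence is always measured against a single fixed clustering, I regard the first argument $f_1$ of $h_\sigma^{f_1,f_2}$ as fixed and let the second argument $f_2$ range over $\mathcal{F}$; this is the regime in which the linear (rather than quadratic) bound holds. I then propose as candidate cover the finite set $\hat{H}=\{\,h_\sigma^{f_1,\hat f}:\sigma\in\pi,\ \hat f\in\hat{F}\,\}$, whose cardinality is at most $k!\,|\hat{F}|=k!\,\mathcal{N}(\mathcal{F},d_{L_1}^{X},\tfrac{\eta}{12})$, exactly the claimed right-hand side. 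All the content is then to verify that $\hat{H}$ is a $2\epsilon$-cover.

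The first step is a pointwise reformulation of $h_\sigma^{f_1,f_2}$: a short computation (as in the reduction theorem) shows that $h_\sigma^{f_1,f_2}(x)=1$ precisely when $\sigma$ sends the $C^{f_1}_X$-cluster label of $x$ to a label different from its $C^{f_2}_X$-cluster label, so that $h_\sigma^{f_1,f_2}$ is simply the disagreement indicator of the two induced clusterings under the alignment $\sigma$. This reformulation makes the continuity estimate transparent, since $h_\sigma^{f_1,f_2}$ and $h_{\sigma'}^{f_1,\hat f}$ can disagree at $x$ only when the cluster assignments of $x$ change between the relevant clusterings.

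The second step uses this to match each element of $H^{\mathcal{F}}$ with an element of $\hat{H}$. Given $h_\sigma^{f_1,f}$, choose $\hat f\in\hat{F}$ with $d_{L_1}^{X}(f,\hat f)<\tfrac{\eta}{12}$; the previous lemma gives $\Delta_X(f,\hat f)<2\epsilon$, so there is a permutation $\tau$ under which $C^{f}_X$ and $C^{\hat f}_X$ assign the same label to all but a $2\epsilon$-fraction of $X$. I match $h_\sigma^{f_1,f}$ with $h_{\tau\circ\sigma}^{f_1,\hat f}\in\hat{H}$: on every point where the $\tau$-alignment holds the two indicators coincide, because the bijection $\tau$ cancels between the $C^{\hat f}_X$-labels and the index $\tau\circ\sigma$, leaving exactly the condition defining $h_\sigma^{f_1,f}$. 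Hence the two functions differ only on the $2\epsilon$-fraction of exceptional points, so $d_{L_1}^{X}(h_\sigma^{f_1,f},h_{\tau\circ\sigma}^{f_1,\hat f})<2\epsilon$. As $\tau\circ\sigma$ still ranges over all of $\pi$, this representative already belongs to $\hat{H}$, and counting over the $k!$ choices of $\sigma$ produces the factor $k!$.

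The step I expect to be the main obstacle is precisely this label-alignment bookkeeping: the previous lemma controls only the \emph{minimum} over permutations of the clustering difference, so a naive identity correspondence between the clusters of $C^{f}_X$ and $C^{\hat f}_X$ need not be close, and one cannot simply replace $f$ by $\hat f$ inside $h_\sigma$. The device that resolves this — and that keeps the bound linear in $\mathcal{N}(\mathcal{F})$ rather than introducing a second cover — is to absorb the optimal alignment $\tau$ into the permutation index by passing from $\sigma$ to $\tau\circ\sigma$; because $\hat{H}$ already contains every permutation, this costs nothing beyond the $k!$ already present. The remaining ingredients, namely the pointwise disagreement estimate for binary-valued functions and the union-over-$\sigma$ cardinality count, are routine.
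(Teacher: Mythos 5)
Your proof is correct and follows the same core strategy as the paper's own (very terse) proof: take a minimal $\frac{\eta}{12}$-cover $\hat{F}$ of $\mathcal{F}$, use Lemma 1 to convert $L_1$-closeness of mappings into $\Delta$-closeness of the induced clusterings, and union over the $k!$ permutations. However, your bookkeeping is tighter in two places where the paper is loose, and both of your refinements matter. First, the paper covers $H^{\mathcal{F}}_\sigma$ by $H^{\hat{\mathcal{F}}}_\sigma$, whose elements are indexed by \emph{pairs} of mappings from $\hat{F}$; counted honestly this gives a cover of size $k!\,\mathcal{N}(\mathcal{F},d_{L_1}^{X},\frac{\eta}{12})^2$, and of radius $4\epsilon$ rather than $2\epsilon$, since both arguments of $h_\sigma^{f_1,f_2}$ get perturbed and each perturbation costs $2\epsilon$. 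Your decision to freeze the first argument and let only the second range over $\mathcal{F}$ is exactly what makes the stated linear count and the $2\epsilon$ radius literally true, and it is also the regime the downstream uniform-convergence argument actually needs, since representativeness is always measured against a single fixed target clustering. Second, the paper asserts that $H^{\hat{\mathcal{F}}}_\sigma$ covers $H^{\mathcal{F}}_\sigma$ with the \emph{same} permutation $\sigma$; as you observe, Lemma 1 only guarantees closeness of $C^{f}_X$ and $C^{\hat{f}}_X$ up to an unknown optimal alignment $\tau$, so the covering element must in general carry the re-indexed permutation $\tau\circ\sigma$. Your device of absorbing $\tau$ into the permutation index (legitimate precisely because the cover already contains all $k!$ permutations, so this costs nothing) supplies the step that the paper's one-line claim glosses over. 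In short: same route as the paper, but your version is the corrected form of it.
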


\begin{proof}
Let $\hat{\mathcal{F}}$ be the $\frac{\eta}{12}$-cover corresponding to the covering number $\mathcal{N}(\mathcal{F}, d_{L_1}^{X}, \frac{\eta}{12})$. Based on the previous lemma, $H^{\hat{\mathcal{F}}}_\sigma$ is a $2\epsilon$-cover for $H^{{\mathcal{F}}}_\sigma$. But we have only $k!$ permutations of $[k]$, therefore, the covering number for $H^{\hat{\mathcal{F}}}$ is at most $k!$ times larger than $H^{\hat{\mathcal{F}}}_\sigma$. This proves the result.
\end{proof}

Basically, this means that if we have a small $L_1$ covering number for the mappings, we will have the uniform convergence result we were looking for. The following theorem proves this result.

\begin{theorem} Let $\mathcal{F}$ be a set of mappings with $(\eta, \epsilon)$-uniqueness property. Then there for some constant $\alpha$ we have

\begin{equation}
m^{UC}_{\mathcal{F}}(\epsilon, \delta) \leq O(\frac{ \log k! + \log \mathcal{N}(\mathcal{F}, d_{L_1}^{X}, \frac{\eta}{\alpha})+\log(\frac{1}{\delta})}{\epsilon^2})
\end{equation}

\end{theorem}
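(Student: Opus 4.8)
The plan is to reduce the claim to a uniform convergence statement for the binary-valued class $H^{\mathcal{F}}$ and then convert a covering-number bound for $H^{\mathcal{F}}$ into the stated sample size. By the reduction theorem it suffices to make $m$ large enough that, with probability $1-\delta$, $\sup_{h\in H^{\mathcal{F}}}|h(S)-h(X)|\le\epsilon$; the same construction applies with the fixed target clustering $C^*$ playing the role of one of the two clusterings defining each $h$. Thus the whole problem becomes a classical uniform convergence question for a class of $\{0,1\}$-valued functions on the finite domain $X$, and the goal is to control it purely through $\mathcal{N}(H^{\mathcal{F}}, d_{L_1}^X, \cdot)$, after which the covering-number lemma of the previous section finishes the job.

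First I would fix a finite $\gamma$-cover $\hat{H}$ of $H^{\mathcal{F}}$ with respect to $d_{L_1}^X$, of size $N=\mathcal{N}(H^{\mathcal{F}}, d_{L_1}^X, \gamma)$, where $\gamma$ is a fixed multiple of $\epsilon$. For each cover element, $\hat h(S)$ is an average of $m$ coordinates drawn uniformly without replacement from the fixed vector $(\hat h(x))_{x\in X}$, so a Hoeffding/Serfling inequality gives $|\hat h(S)-\hat h(X)|\le\epsilon'$ with probability $1-\delta/N$, and a union bound over the $N$ cover elements controls all of them simultaneously once $m=\Omega\!\left(\frac{\log(N/\delta)}{(\epsilon')^2}\right)$. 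For an arbitrary $h\in H^{\mathcal{F}}$ with cover-mate $\hat h$, I would write $|h(S)-h(X)|\le |\hat h(S)-\hat h(X)| + |\hat h(X)-h(X)| + |h(S)-\hat h(S)|$; the first term is handled above and the second is at most $d_{L_1}^X(h,\hat h)\le\gamma$.

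The main obstacle is the third term $|h(S)-\hat h(S)|$: it is the disagreement frequency of $h$ and $\hat h$ \emph{on the sample}, and although their disagreement frequency \emph{on $X$} is at most $\gamma$, the sample could over-represent the disagreement set, so this term is not controlled by the $X$-cover through this naive split (a single-scale cover with respect to $d_{L_1}^X$ is genuinely insufficient if treated this way). To handle it I would use symmetrization tailored to the transductive setting, via the identity $h(S)-h(X)=\tfrac{|X|-m}{|X|}\big(h(S)-h(S^c)\big)$ with $S^c=X\setminus S$; this turns the problem into bounding $\sup_{h} |h(S)-h(S^c)|$ over a uniformly random partition of the \emph{fixed} set $X$, so the effective ``double sample'' is all of $X$ and the relevant discretization error is measured in $d_{L_1}^X$ exactly. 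Equivalently, I would bound $\mathbb{E}\,\sup_h|h(S)-h(X)|$ by a Rademacher complexity over $X$, discretize at scale $\gamma$ (the discretization error is then $\le\gamma$ precisely because the Rademacher sum ranges over all of $X$ with the matching $L_1$ norm, which is what defeats the obstacle above), apply Massart's finite-class lemma to the $N$ cover elements, and conclude with a bounded-differences (McDiarmid) concentration. This yields $\sup_h|h(S)-h(X)|\le O(\gamma)+O\!\big(\sqrt{\tfrac{\log N+\log(1/\delta)}{m}}\big)$, hence $m^{UC}_{\mathcal{F}}(\epsilon,\delta)\le O\!\big(\tfrac{\log\mathcal{N}(H^{\mathcal{F}}, d_{L_1}^X, c\epsilon)+\log(1/\delta)}{\epsilon^2}\big)$ for a suitable constant $c\ge 2$, where the achieved accuracy is proportional to $\epsilon$, matching the claim up to the universal constant hidden in the $O(\cdot)$ notation.

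Finally I would substitute the covering-number lemma, $\mathcal{N}(H^{\mathcal{F}}, d_{L_1}^X, 2\epsilon)\le k!\,\mathcal{N}(\mathcal{F}, d_{L_1}^X, \eta/12)$, using monotonicity of covering numbers in the scale to pass from $2\epsilon$ up to $c\epsilon$ (which is legitimate once $c\ge 2$). This gives $\log\mathcal{N}(H^{\mathcal{F}}, d_{L_1}^X, c\epsilon)\le \log k! + \log\mathcal{N}(\mathcal{F}, d_{L_1}^X, \eta/12)$, and therefore the claimed bound, with the constant $\alpha$ absorbing both the factor $12$ coming from the distance-to-difference lemma and the discretization constant $c$ from the uniform convergence step.
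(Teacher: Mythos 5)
Your proposal is correct, and its skeleton is exactly the paper's: reduce to uniform convergence for the binary class $H^{\mathcal{F}}$ (the reduction theorem), bound the covering number of $H^{\mathcal{F}}$ by that of $\mathcal{F}$ (Lemma 2), and convert the covering-number bound into a sample-size bound. The difference lies in the middle step: the paper simply cites a textbook result (Theorem 17.1 of Anthony and Bartlett), namely $m^{UC}_{H^{\mathcal{F}}}(\epsilon_0, \delta) \leq O\big((\log \mathcal{N}(H^{\mathcal{F}}, d_{L_1}^{X}, \epsilon_0/16) + \log(1/\delta))/\epsilon_0^2\big)$, whereas you re-derive that statement from scratch in the transductive setting (cover, Serfling/Hoeffding for sampling without replacement, union bound, with the disagreement-on-the-sample term handled by symmetrization over random partitions of $X$, Rademacher averages ranging over all of $X$, Massart's lemma, and McDiarmid). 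This extra work is not wasted: the cited theorem is stated for i.i.d.\ sampling and for worst-case empirical covering numbers over double samples, while what Lemma 2 actually controls is the covering number with respect to the \emph{fixed-domain} metric $d_{L_1}^{X}$ under uniform sampling \emph{without} replacement --- precisely the mismatch that your symmetrization-over-$X$ argument resolves, since there the discretization error is measured in $d_{L_1}^{X}$ exactly. You also make explicit a detail the paper glosses over, namely that the target clustering $C^*$ must play the role of one of the two clusterings defining the functions $h$. One caveat applies to both proofs equally: Lemma 2 only yields covers of $H^{\mathcal{F}}$ at scale $2\epsilon$ (with $\epsilon$ the uniqueness parameter), which is coarser than the scale $\epsilon_0/16$ that a target accuracy of exactly $\epsilon_0=\epsilon$ would demand; hence the achievable accuracy is $O(\epsilon)$ rather than $\epsilon$. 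You flag this explicitly and absorb it into the constants $c$ and $\alpha$, while the paper passes over it silently when it says ``applying Lemma 2 proves the result.''
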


\begin{proof}
Following the previous lemma, if we have a small $L_1$-covering number for $\mathcal{F}$, we will also have a small covering number for $H^{\mathcal{F}}$ as well. But based on standard uniform convergence theory, if a hypothesis class has small covering number, then it has uniform convergence property. More precisely, (e.g., Theorem 17.1 in \cite{anthony2009neural}) we have:
\begin{equation}
m^{UC}_{H^{\mathcal{F}}}(\epsilon_0, \delta) \leq O(\frac{ \log \mathcal{N}(H^{\mathcal{F}}, d_{L_1}^{X}, \frac{\epsilon_0}{16}) +\log(\frac{1}{\delta})}{\epsilon_0^2})
\end{equation}

Applying Lemma 2 to the above proves the result.
\end{proof}

%\begin{corollary} Let $\mathcal{F}$ be a class of $(\eta_1, \eta_2)$-unique mappings. Then for some constant $\alpha$, and for $\eta_2 \leq \epsilon \leq \eta_1$ we have
%
%\begin{equation}
%m^{UC}_{\mathcal{F}^{\eta_1, \eta_2}}(\epsilon, \delta) \leq O(\frac{ \log k! + \log \mathcal{N}(\mathcal{F}^{\eta_1, \eta_2}, d_{L_2}^{X}, \frac{\epsilon}{\alpha})+\log(\frac{1}{\delta})}{\epsilon^2})
%\end{equation}
%
%\end{corollary}

\subsubsection{Bounding $L_1$-Covering Number }

In the previous section, we proved if the $L_1$ covering number of the class of mappings is bounded, then we will have uniform convergence. However, it is desirable to have a bound with respect to a combinatorial dimension of the class (rather than the covering number). Therefore, we will generalize the notion of pseudo-dimension for the class of mappings that take value in $\mathbb{R}^n$.

Let $\mathcal{F}$ be a set of mappings form $X$ to $\mathbb{R}^n$. For every mapping $f\in \mathcal{F}$, define real-valued functions $f_1,\ldots,f_n$ such that $f(x)= (f_1(x),\ldots,f_n(x))$. Now let $F_i = \{ f_i: f\in F\}$. This means that $F_1, F_2,\ldots,F_n$ are classes of real-valued functions. Now we define pseudo-dimension of $\mathcal{F}$ as follow.

\begin{equation}
Pdim(\mathcal{F}) = n \max_{i\in[n]} Pdim(F_i)
\end{equation}

\begin{proposition} Let $\mathcal{F}$ be a set of mappings form $X$ to $\mathbb{R}^n$. If $Pdim(F)\leq q$ then $\log \mathcal{N}(F, d_{L_1}^X,\epsilon) = \mathcal{O}(q)$ where $\mathcal{O}()$ hides logarithmic factors.

\end{proposition}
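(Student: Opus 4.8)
The plan is to build a cover of the vector-valued class $\mathcal{F}$ out of covers of its $n$ coordinate classes $F_1,\dots,F_n$ and then invoke the known real-valued pseudo-dimension bound on each coordinate separately. The starting observation is that for any $v\in\mathbb{R}^n$ one has $\|v\|_2\le\|v\|_1$, so for any two mappings $f,f'\in\mathcal{F}$,
\begin{equation}
d_{L_1}^X(f,f')=\frac{1}{|X|}\sum_{x\in X}\|f(x)-f'(x)\|_2\le\sum_{i=1}^n\frac{1}{|X|}\sum_{x\in X}|f_i(x)-f'_i(x)|=\sum_{i=1}^n d_{L_1}^X(f_i,f'_i).
\end{equation}
Hence, to approximate $f$ to within $\epsilon$ in the vector $L_1$ metric it suffices to approximate each coordinate $f_i$ to within $\epsilon/n$ in the scalar $L_1$ metric.

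First I would, for each $i\in[n]$, take a minimal $\frac{\epsilon}{n}$-cover $\hat{F}_i$ of the real-valued class $F_i$ with respect to $d_{L_1}^X$. Forming the product set $\hat{F}_1\times\cdots\times\hat{F}_n$ yields a family of candidate mappings $\hat f=(\hat f_1,\dots,\hat f_n)$; by the coordinatewise bound above this family is an $\epsilon$-cover of $\mathcal{F}$, of size at most $\prod_{i=1}^n|\hat{F}_i|$. Taking logarithms,
\begin{equation}
\log\mathcal{N}(\mathcal{F},d_{L_1}^X,\epsilon)\le\sum_{i=1}^n\log\mathcal{N}\!\left(F_i,d_{L_1}^X,\tfrac{\epsilon}{n}\right).
\end{equation}
Next I would apply the real-valued pseudo-dimension-to-covering bound already quoted in the excerpt (Theorem 18.4 of \cite{anthony2009neural}): for each coordinate class, $\log\mathcal{N}(F_i,d_{L_1}^X,\epsilon/n)=\mathcal{O}(Pdim(F_i))$, where the hidden factors are logarithmic in $n/\epsilon$. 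By the definition $Pdim(\mathcal{F})=n\max_i Pdim(F_i)$ and the hypothesis $Pdim(\mathcal{F})\le q$, every coordinate satisfies $Pdim(F_i)\le q/n$. Substituting and summing the $n$ bounds gives
\begin{equation}
\log\mathcal{N}(\mathcal{F},d_{L_1}^X,\epsilon)\le n\cdot\mathcal{O}\!\left(\tfrac{q}{n}\right)=\mathcal{O}(q),
\end{equation}
which is exactly the claimed estimate; note that the factor $n$ in the definition of $Pdim(\mathcal{F})$ is precisely what cancels the $n$ coordinate classes to keep the final bound linear in $q$.

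The main obstacle is a bookkeeping point rather than a deep one: the covering-number definition in the excerpt requires the cover $\hat{F}$ to be a \emph{subset} of $\mathcal{F}$, whereas the product mappings $\hat f$ built above need not lie in $\mathcal{F}$. I would handle this in the standard way, either by working with external (not necessarily proper) covers throughout --- which only changes constants and is the convention under which Theorem 18.4 is stated --- or by replacing each product point by a genuine element of $\mathcal{F}$ within distance $\epsilon$ of it, at the cost of doubling the radius to $2\epsilon$ and leaving the logarithmic order unchanged. The only other care needed is to confirm that the scalar bound is applied with the same normalized $d_{L_1}^X$ metric, so that the $\epsilon/n$ accuracy feeds consistently into the logarithmic factors.
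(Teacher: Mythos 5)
Your proposal is correct and follows essentially the same route as the paper's own (much terser) proof: compose $\frac{\epsilon}{n}$-covers of the coordinate classes $F_i$ into a product cover of $\mathcal{F}$, apply the real-valued pseudo-dimension-to-covering bound per coordinate, and let the factor $n$ in the definition $Pdim(\mathcal{F})=n\max_i Pdim(F_i)$ cancel the sum over $n$ coordinates. Your treatment is in fact more complete than the paper's, since you also justify the reduction via $\|v\|_2\le\|v\|_1$ and address the proper-versus-external cover issue, both of which the paper leaves implicit.
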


\begin{proof} The result follows from the corresponding result for bounding covering number of real-valued functions based on pseudo-dimension mentioned in the preliminaries section. The reason is that we can create a cover by composition of the $\frac{\epsilon}{n}$-covers of all $F_i$. However, this will at most introduce a factor of $n$ in the logarithm of the covering number.
\end{proof}

Therefore, we can rewrite the result of the previous section in terms of pseudo-dimension.

\begin{theorem} Let $\mathcal{F}$ be a class of mappings with $(\eta, \epsilon)$-uniqueness property. Then 

\begin{equation}
m^{UC}_{\mathcal{F}}(\epsilon, \delta) \leq \mathcal{O}(\frac{ k + Pdim(\mathcal{F})+\log(\frac{1}{\delta})}{\epsilon^2})
\end{equation}

where $\mathcal{O}()$ hides logarithmic factors of $k$ and $\frac{1}{\eta}$.

\end{theorem}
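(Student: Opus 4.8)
The plan is purely to substitute the two combinatorial estimates into the covering-number bound already supplied by Theorem 3. That theorem furnishes, for some absolute constant $\alpha$, the bound
\begin{equation}
m^{UC}_{\mathcal{F}}(\epsilon, \delta) \leq O\left(\frac{\log k! + \log \mathcal{N}(\mathcal{F}, d_{L_1}^{X}, \tfrac{\eta}{\alpha}) + \log(\tfrac{1}{\delta})}{\epsilon^2}\right),
\end{equation}
so it suffices to control the two problem-dependent terms $\log k!$ and $\log \mathcal{N}(\mathcal{F}, d_{L_1}^{X}, \tfrac{\eta}{\alpha})$ and then fold everything back together.

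First I would dispose of the permutation term using Stirling's approximation: $\log k! = \Theta(k\log k)$, so that $\log k! = O(k)$ once a logarithmic factor of $k$ is absorbed into the $\mathcal{O}(\cdot)$. This is the source of the leading $k$ in the target bound and of the ``logarithmic factors of $k$'' that the statement allows us to hide. Next I would convert the covering number to the pseudo-dimension by invoking Proposition 3 with its tolerance parameter set to $\tfrac{\eta}{\alpha}$; this gives $\log \mathcal{N}(\mathcal{F}, d_{L_1}^{X}, \tfrac{\eta}{\alpha}) = \mathcal{O}(Pdim(\mathcal{F}))$, where the hidden factors are logarithmic in $\tfrac{\alpha}{\eta}$. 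Since $\alpha$ is a constant, these are precisely the ``logarithmic factors of $\tfrac{1}{\eta}$'' permitted by the theorem. Note that the factor $n$ coming from the dimension of the target space is already baked into the definition $Pdim(\mathcal{F}) = n\max_i Pdim(F_i)$, so no separate $n$ appears. Substituting both estimates and collecting the suppressed logarithmic factors into a single $\mathcal{O}(\cdot)$ yields
\begin{equation}
m^{UC}_{\mathcal{F}}(\epsilon, \delta) \leq \mathcal{O}\left(\frac{k + Pdim(\mathcal{F}) + \log(\tfrac{1}{\delta})}{\epsilon^2}\right),
\end{equation}
which is the claim.

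Since all of the genuine content lives in Theorem 3 and Proposition 3, both available by assumption, this is essentially a bookkeeping argument. The only place demanding care---and hence the closest thing to an obstacle---is tracking the logarithmic dependencies consistently: the $\log k$ factor must be charged to the $k$ term and the $\log\tfrac{1}{\eta}$ factors to the $Pdim(\mathcal{F})$ term, so that the final $\mathcal{O}(\cdot)$ conceals exactly the factors the statement advertises and hides nothing that secretly depends on $\epsilon$ or $\delta$.
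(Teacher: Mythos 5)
Your proposal is correct and is essentially identical to the paper's own (largely implicit) proof: the paper obtains this theorem exactly by substituting the pseudo-dimension bound on the covering number (the proposition stating $\log \mathcal{N}(\mathcal{F}, d_{L_1}^{X}, \epsilon) = \mathcal{O}(Pdim(\mathcal{F}))$ up to logarithmic factors) into Theorem~3 and absorbing $\log k! = O(k\log k)$ into the $k$ term. The only quibble is a labeling slip --- the covering-number-to-pseudo-dimension result is Proposition~2 in the paper, not Proposition~3 --- but the content you invoke is the right one.
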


\subsection{Sample Complexity of PAC-SRLK}

In Section 3.1, we showed that uniform convergence is sufficient for a TERM algorithm to work. Also, in the previous section, we proved a bound for the sample complexity of uniform convergence. The following theorem, which is the main technical result of this paper, combines these two and provides a sample complexity upper bound for PAC-SRLK framework.

\begin{theorem}

Let $\mathcal{F}$ be a class of $(\eta, \epsilon)$-unique mappings. Then the sample complexity of learning representation for $k$-means clustering with respect to $\mathcal{F}$ is upper bounded by

\begin{equation}
m_{\mathcal{F}}(\epsilon, \delta) \leq \mathcal{O}(\frac{ k + Pdim(\mathcal{F})+\log(\frac{1}{\delta})}{\epsilon^2})
\end{equation}

where $\mathcal{O}$ hides logarithmic factors of $k$ and $\frac{1}{\eta}$.

\end{theorem}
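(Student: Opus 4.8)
The plan is to assemble this theorem as a straightforward corollary of the two ingredients already established: the \emph{Sufficiency of Uniform Convergence} theorem, which shows that any TERM learner succeeds once it is handed a representative sample, and the preceding uniform-convergence bound, which controls $m^{UC}_{\mathcal{F}}$ in terms of $Pdim(\mathcal{F})$. No new technical machinery is needed; the entire argument is a matter of matching up definitions and tracking constants.

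First I would fix the algorithm $A$ to be the transductive empirical risk minimizer $A^{TERM}$. Since $A^{TERM}(S, C^*\Big|_S) \in \mathcal{F}$ by construction, it is a legitimate representation learner in the sense of the PAC-SRLK definition, so it suffices to bound its sample complexity. Next I would apply the uniform-convergence bound at target accuracy $\epsilon/2$: because $\mathcal{F}$ has the $(\eta,\epsilon)$-uniqueness property, the preceding theorem yields
\[
m^{UC}_{\mathcal{F}}\!\left(\tfrac{\epsilon}{2}, \delta\right) \leq \mathcal{O}\!\left(\frac{k + Pdim(\mathcal{F}) + \log(\frac{1}{\delta})}{(\epsilon/2)^2}\right).
\]
Crucially, the definition of $m^{UC}_{\mathcal{F}}$ quantifies over \emph{every} fixed partition $C^*$, which is exactly the quantifier demanded by PAC-SRLK. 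Hence, whenever $|S| \geq m^{UC}_{\mathcal{F}}(\epsilon/2, \delta)$, with probability at least $1-\delta$ over the uniform draw of $S$ the sample is $\frac{\epsilon}{2}$-representative with respect to $X$, $\mathcal{F}$, and the target $C^*$.

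Finally I would condition on the event that $S$ is $\frac{\epsilon}{2}$-representative and invoke the Sufficiency of Uniform Convergence theorem, with $\hat{f} = A^{TERM}(S, C^*\Big|_S)$ and $f^* = \argmin_{f \in \mathcal{F}} \Delta_X(C^*, C^f_X)$, to obtain $\Delta_X(C^*, C^{\hat f}_X) \leq \Delta_X(C^*, C^{f^*}_X) + \epsilon = \inf_{f \in \mathcal{F}} \Delta_X(C^*, C^f_X) + \epsilon$, which is precisely the PAC-SRLK inequality. Absorbing the factor $(\epsilon/2)^2 = \epsilon^2/4$ into the hidden constant of $\mathcal{O}(\cdot)$ gives $m_{\mathcal{F}}(\epsilon,\delta) \leq m^{UC}_{\mathcal{F}}(\epsilon/2, \delta) \leq \mathcal{O}\big((k + Pdim(\mathcal{F}) + \log(\frac{1}{\delta}))/\epsilon^2\big)$, as claimed.

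The closest thing to an obstacle here is pure bookkeeping: aligning the ``for every $C^*$'' quantifiers across the two theorems, accounting for the factor of two incurred when passing from representativeness accuracy $\epsilon/2$ to PAC accuracy $\epsilon$, and the minor $\inf$-versus-$\min$ subtlety. If the infimum over $\mathcal{F}$ is not attained, one chooses an $f^*$ whose loss is within $\epsilon/2$ of the infimum and splits the accuracy budget accordingly, which changes only the hidden constants and hence leaves the order of the bound intact.
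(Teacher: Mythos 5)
Your proposal is correct and follows exactly the paper's own route: the paper proves this theorem in one line by ``combining Theorems 1 and 4,'' i.e., running the TERM learner on an $\frac{\epsilon}{2}$-representative sample whose existence (with probability $1-\delta$) is guaranteed by the pseudo-dimension-based uniform convergence bound. Your additional bookkeeping about the $\epsilon/2$ accuracy split, the quantifier over $C^*$, and the $\inf$-versus-$\min$ subtlety is sound and simply makes explicit what the paper leaves implicit.
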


The proof is done by combining Theorems 1 and 4. 

%\begin{remark} Note that the condition $2\eta_2 \leq \epsilon \leq 2\eta_1$ is natural. To see this, assume that the learner is required to output an $\epsilon$-accurate answer. However, if $\eta_2$ is much larger than $\epsilon$, then the class of mappings includes non-unique solutions
%\end{remark}

The following result shows an upper bound for the sample complexity of learning linear mappings (or equivalently, Mahalanobis metrics).

\begin{corollary} Let $\mathcal{F}$ be a set of $(\eta,\epsilon)$-unique \emph{linear} mappings from $\mathbb{R}^{d_1}$ to $\mathbb{R}^{d_2}$. Then we have

\begin{equation}
m_{\mathcal{F}}(\epsilon, \delta) \leq \mathcal{O}(\frac{ k + d_1d_2+\log(\frac{1}{\delta})}{\epsilon^2})
\end{equation}

\end{corollary}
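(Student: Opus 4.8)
The plan is to obtain this corollary as an immediate specialization of Theorem 5. Since the hypothesis that $\mathcal{F}$ consists of $(\eta,\epsilon)$-unique mappings is assumed in the corollary, Theorem 5 applies directly, and the whole task reduces to bounding the pseudo-dimension of the class of linear mappings, namely showing $Pdim(\mathcal{F}) = \mathcal{O}(d_1 d_2)$.

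First I would unpack the vector-valued pseudo-dimension. Every $f \in \mathcal{F}$ is represented by a matrix $A \in \mathbb{R}^{d_2 \times d_1}$ through $f(x) = Ax$, and if $a_i$ denotes the $i$-th row of $A$ then the $i$-th coordinate function is $f_i(x) = \langle a_i, x \rangle$. Consequently each coordinate class $F_i = \{ x \mapsto \langle a, x\rangle : a \in \mathbb{R}^{d_1}\}$ is precisely the space of homogeneous linear functionals on $\mathbb{R}^{d_1}$, and it is the same class for every $i \in [d_2]$. By the definition $Pdim(\mathcal{F}) = n \max_{i \in [n]} Pdim(F_i)$ with $n = d_2$, it then suffices to bound $Pdim(F_i)$ for this single functional class.

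The key step is the bound $Pdim(F_i) = \mathcal{O}(d_1)$. Here I would invoke the classical fact that the pseudo-dimension of a $d$-dimensional vector space of real-valued functions equals $d$; since the homogeneous linear functionals on $\mathbb{R}^{d_1}$ form a vector space of dimension $d_1$, we get $Pdim(F_i) = d_1$. (Equivalently, pseudo-shattering reduces to computing the VC-dimension of the subgraph half-spaces $\{(x,t) : \langle a, x\rangle - t > 0\}$ in $\mathbb{R}^{d_1+1}$, which is $\mathcal{O}(d_1)$.) Combining with the previous paragraph, $Pdim(\mathcal{F}) = d_2 \cdot d_1 = d_1 d_2$.

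Finally I would substitute into Theorem 5: replacing $Pdim(\mathcal{F})$ by $d_1 d_2$ in its bound yields $m_{\mathcal{F}}(\epsilon, \delta) \le \mathcal{O}((k + d_1 d_2 + \log(1/\delta))/\epsilon^2)$, as claimed. I do not anticipate a real obstacle: the one point requiring care is the pseudo-dimension of linear functionals, which is a textbook result, while the rest is bookkeeping with the vector-valued pseudo-dimension definition and a direct appeal to the already-established Theorem 5.
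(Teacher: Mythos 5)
Your proposal is correct and matches the paper's own argument: the paper likewise cites the standard fact that the pseudo-dimension of a vector space of real-valued functions equals its dimension (Theorem 11.4 in \cite{anthony2009neural}), applies it to the coordinate classes of linear functionals to get $d_1$, scales by $d_2$ via the definition of $Pdim$ for $\mathbb{R}^{d_2}$-valued mappings, and then appeals to Theorem 5. Your write-up is in fact slightly more explicit than the paper's (e.g., spelling out the row-wise decomposition $f_i(x)=\langle a_i,x\rangle$ and that all coordinate classes coincide), but the route is the same.
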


\begin{proof}
It is a standard result that the pseudo-dimension of a vector space of real-valued functions is just the dimensionality of the space (in our case $d_1$) (e.g., Theorem 11.4 in \cite{anthony2009neural}). Also, based on our definition of $Pdim$ for $\mathbb{R}^{d_2}$-valued functions, it should scale by a factor of $d_2$.
\end{proof}

\section{CONCLUSIONS AND OPEN PROBLEMS}
\label{sec:conc}

In this paper we provided a formal statistical framework for learning the representation (i.e., a mapping) for k-means clustering based on supervised feedback. The learner, unaware of the target clustering of the domain, is given a clustering of a sample set. The learner's task is then finding a mapping function $\hat{f}$ (among a class of mappings) under which the result of k-means clustering of the domain is as close as possible to the true clustering. This framework was called PAC-SRLK.

A notion of $\epsilon$-representativeness was introduced, and it was proved that any ERM-type algorithm that has access to such a sample will work satisfactorily. Finally, a technical uniform convergence result was proved to make sure that a large enough sample is (with high probability) $\epsilon$-representative. This was used to prove an upper bound for the sample complexity of PAC-SRLK based on covering numbers of the set of mappings. Furthermore, a notion of pseudo-dimension for the class of mappings was defined, and the sample complexity was upper bounded based on it. 

Note that in the analysis, the notion of $(\eta, \epsilon)$-uniqueness (similar to that of \cite{balcan2009approximate}) was used and it was argued that it is reasonable to require the learner to output a mapping under which the solution is ``unique'' (because otherwise the output of k-means clustering would not be interpretable). Therefore, in the analysis, we assumed that the class of potential mappings has the $(\eta, \epsilon)$-uniqueness property.

It can be noted that we did not analyze the computational complexity of algorithms for PAC-SRLK framework. We leave this analysis to the future work. We just note that a similar notion of uniqueness proposed by \cite{balcan2009approximate} resulted in being able to efficiently solve the k-means clustering algorithm.

One other observation is that representation learning can be regarded as a special case of metric learning; because for every mapping, we can define a distance function that computes the distance in the mapped space. In this light, we can make the problem more general by making the learner find a distance function rather than a mapping. This is more challenging to analyze, because we do not even know a generalization bound for center-based clustering under general distance functions. An open question will be providing such general results.

% Acknowledgments---Will not appear in anonymized version
\subsubsection*{Acknowledgments}

\section{APPENDIX}

{Proof of Lemma 1.} Let $\mathcal{F}:X\mapsto (0,1)^n$ be a set of mappings that have $(\eta, \epsilon)$-uniqueness property. Let $f_1,f_2\in \mathcal{F}$ and $d_{L_1}(f_1,f_2) < \frac{\eta}{12}$. We need to prove that $\Delta_X(f_1, f_2) < 2\epsilon$. In order to prove this, note that due to triangular inequality, we have

\begin{multline}
\Delta_X(f_1, f_2) = \Delta_X(C^{f_1}(\mu^{f_1}), C^{f_2}(\mu^{f_2})) \\
\leq \Delta_X(C^{f_1}(\mu^{f_1}), C^{f_1}(\mu^{f_2})) +\\ \Delta_X(C^{f_1}(\mu^{f_2}), C^{f_2}(\mu^{f_2}))
\end{multline}

Therefore, it will be sufficient to show that each of the $\Delta$-terms above is smaller than $\epsilon$. We start by proving a useful lemma.

\begin{lemma}
Let $f_1,f_2\in \mathcal{F}$ and $d_{L_1}(f_1,f_2) < \frac{\eta}{6}$. Let $\mu$ be an arbitrary set of $k$ centers in $(0,1)^n$. Then

$$|COST_X(f_1, \mu) - COST_X(f_2, \mu)| < \frac{\eta}{2}$$
\end{lemma}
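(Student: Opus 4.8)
The plan is to reduce the statement to a single, pointwise Lipschitz estimate and then average over $X$. Since $COST_X(f_1,\mu)$ and $COST_X(f_2,\mu)$ are both averages of $\min_{\mu_i\in\mu}\|\cdot-\mu_i\|_2^2$ over the points of $X$, the triangle inequality gives
\[
|COST_X(f_1,\mu)-COST_X(f_2,\mu)|\le \frac{1}{|X|}\sum_{x\in X}\Big|\min_i\|f_1(x)-\mu_i\|_2^2-\min_i\|f_2(x)-\mu_i\|_2^2\Big|,
\]
so it suffices to control the summand at each fixed $x$ and then recall that $\frac{1}{|X|}\sum_{x\in X}\|f_1(x)-f_2(x)\|_2 = d_{L_1}(f_1,f_2)<\eta/6$.

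First I would prove that $g(z):=\min_i\|z-\mu_i\|_2^2$ is Lipschitz on the cube $(0,1)^n$. Fix $x$ and write $a=f_1(x)$, $b=f_2(x)$, and let $\mu_{i^*}$ be a center nearest to $a$, so that $g(a)=\|a-\mu_{i^*}\|_2^2$. Plugging the suboptimal center $\mu_{i^*}$ into the minimum defining $g(b)$ gives $g(b)-g(a)\le \|b-\mu_{i^*}\|_2^2-\|a-\mu_{i^*}\|_2^2$. I would then factor this difference of squares, bound the first factor by the reverse triangle inequality, $\big|\|b-\mu_{i^*}\|_2-\|a-\mu_{i^*}\|_2\big|\le\|a-b\|_2$, and bound the second factor, $\|a-\mu_{i^*}\|_2+\|b-\mu_{i^*}\|_2$, by the diameter of the range: since $a,b,\mu_{i^*}\in(0,1)^n$, each of these norms is at most $\sqrt n$. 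This yields $g(b)-g(a)\le 2\sqrt n\,\|a-b\|_2$, and repeating with the roles of $a$ and $b$ exchanged gives the symmetric bound, hence $|g(a)-g(b)|\le 2\sqrt n\,\|f_1(x)-f_2(x)\|_2$.

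I expect the main obstacle to be handling the inner minimum, since the center nearest to $a$ may differ from the center nearest to $b$, so a termwise comparison of the two minima is not available. The one-sided device above --- upper bounding $g(b)$ by evaluating at $a$'s optimal center $\mu_{i^*}$ (and symmetrically) --- is exactly what sidesteps this and legitimizes the difference-of-squares factoring. The boundedness of the range is the other essential ingredient: it is what keeps $g$ globally Lipschitz and controls the factor $\|a-\mu_{i^*}\|_2+\|b-\mu_{i^*}\|_2$.

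Finally, averaging the pointwise estimate over $x\in X$ gives $|COST_X(f_1,\mu)-COST_X(f_2,\mu)|\le 2\sqrt n\,d_{L_1}(f_1,f_2)$, and substituting $d_{L_1}(f_1,f_2)<\eta/6$ delivers the bound. In matching the stated constant $\eta/2$, the diameter factor $\sqrt n$ of the range cube is treated as absorbed into the constant; equivalently one may state the estimate with the explicit Lipschitz constant $2\sqrt n$ and carry that factor through the subsequent constants.
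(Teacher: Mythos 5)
Your proposal is correct and takes essentially the same route as the paper: reduce to a pointwise comparison of $\min_j\|\cdot-\mu_j\|_2^2$, sidestep the fact that the two minima may be attained at different centers by evaluating one minimum at the other's optimal center (the paper's equivalent step is $|\min_j A_j-\min_j B_j|\le\max_j|A_j-B_j|$), bound the resulting difference of squared distances using boundedness of $(0,1)^n$, and average over $X$. The only difference is the final algebra --- you factor the difference of squares and apply the reverse triangle inequality, whereas the paper expands it as $\langle f_1-f_2,\,f_1+f_2-2\mu_j\rangle$ and applies Cauchy--Schwarz --- and your explicit Lipschitz constant $2\sqrt{n}$ is in fact more careful than the paper's dimension-independent factor $3$, which is not justified for vectors in $(0,1)^n$ with general $n$; the resulting mismatch with the stated bound $\eta/2$ is thus inherited from the paper's own proof, not a gap you introduced.
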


\begin{proof}

\begin{multline}
|COST_X(f_1, \mu) - COST_X(f_2, \mu)|\\
= \Bigg|\left(\frac{1}{|X|} \sum_{x\in X} \min_{\mu_j\in\mu}\| f_1(x)-\mu_j  \|^2\right)  \\
 - \left(\frac{1}{|X|}\sum_{x\in X} \min_{\mu_j\in\mu}\| f_2(x)-\mu_j  \|^2 \right)\Bigg| 
\end{multline}

\begin{equation}
 \leq \frac{1}{|X|} \sum_{x\in X} \max_{\mu_j\in\mu} \Big|  \| f_1(x)-\mu_j  \|^2- \| f_2(x)-\mu_j  \|^2  \Big|
\end{equation}

\begin{equation}
 = \frac{1}{|X|} \sum_{x\in X} \max_{\mu_j\in\mu} \Big|  
 \|f_1(x)\|^2 - \|f_2(x)\|^2 - 2 <\mu_j, f_1-f_2>
   \Big|
\end{equation}

\begin{equation}
 = \frac{1}{|X|} \sum_{x\in X} \max_{\mu_j\in\mu} \Big|  
 <f_1 - f_2, f_1 + f_2 - 2 \mu_j>
   \Big|
\end{equation}

\begin{equation}
 \leq \frac{3}{|X|} \sum_{x\in X} \| f_1 - f_2\| \leq \frac{3\eta}{6} \leq \frac{\eta}{2}
\end{equation}

\end{proof}

Now we are ready to prove that the first $\Delta$-term is smaller than $\epsilon$, i.e.,  $\Delta_X(C^{f_1}(\mu^{f_1}), C^{f_1}(\mu^{f_2})) < \epsilon$. But to do so, we only need to show that $COST_X(f_1, \mu^{f_2}) - COST_X(f_1, \mu^{f_1}) < \eta$; because in that case, due to $(\eta,\epsilon)$-uniqueness property of $f_1$, the result will follow. Now, using Lemma 3, we have

\begin{equation}
COST_X(f_1, \mu^{f_2}) - COST_X(f_1, \mu^{f_1})
\end{equation}

\begin{equation}
\leq \left( COST_X(f_2, \mu^{f_2}) + \frac{\eta}{2}\right) - COST_X(f_1, \mu^{f_1})
\end{equation}

\begin{equation}
=  \min_{\mu}(COST_X(f_2, \mu)) - \min_{\mu}(COST_X(f_1, \mu)) + \frac{\eta}{2}
\end{equation}

\begin{equation}
\leq  \max_{\mu}\left(COST_X(f_2, \mu) - COST_X(f_1, \mu)\right) + \frac{\eta}{2}
\end{equation}

\begin{equation}
\leq \frac{\eta}{2}  + \frac{\eta}{2} \leq \eta
\end{equation}

where in the first and the last line we used Lemma 3.

Finally, we need to prove the second $\Delta$-inequality, i.e., $\Delta_X(C^{f_1}(\mu^{f_2}), C^{f_2}(\mu^{f_2})) \leq \epsilon$. Assume contrary. But based on $(\eta,\epsilon)$-uniqueness property of $f_2$, we conclude that $COST_X(f_2, C^{f_1}(\mu^{f_2})) - COST_X(f_2, C^{f_2}(\mu^{f_2})) \geq \eta$. In the following, we prove that this cannot be true, and hence a contradiction.

Let $m_x = \argmin_{\mu_0\in \mu^{f_2}} \|f_1(x)-\mu_0 \|^2$. Then, based on the boundedness of $f_1(x)$,$f_2(x)$ and  we have:

\begin{equation}
COST_X(f_2, C^{f_1}(\mu^{f_2})) - COST_X(f_2, C^{f_2}(\mu^{f_2}))
\end{equation}

\begin{equation}
=\left( \frac{1}{|X|}\sum_{x\in X} 
\| f_2(x) - m_x \|^2
\right)  - COST_X(f_2, \mu_2)
\end{equation}

\begin{multline}
=\left( \frac{1}{|X|}\sum_{x\in X} 
\| f_2(x) - f_1(x) + f_1(x) - m_x \|^2 \right)\\
 - COST_X(f_2, \mu_2)
\end{multline}

\begin{multline}
= \frac{1}{|X|}\sum_{x\in X} \| f_2(x) - f_1(x) \|^2 \\
+ \frac{1}{|X|}\sum_{x\in X} \| f_1(x) - m_x \|^2 \\
+ \frac{1}{|X|}\sum_{x\in X} 2<f_2(x) - f_1(x),f_1(x)- m_x>\\
 - COST_X(f_2, \mu_2)
\end{multline}

\begin{multline}
\leq \frac{2}{|X|}\sum_{x\in X} \| f_2(x) - f_1(x) \| \\
+ COST_X(f_1, \mu_1) \\
+ \frac{4}{|X|}\sum_{x\in X} \|f_2(x) - f_1(x)\|\\
 - COST_X(f_2, \mu_2)
\end{multline}

\begin{multline}
\leq \frac{6}{|X|}\sum_{x\in X} \| f_2(x) - f_1(x) \| \\
+ \left( COST_X(f_1, \mu_1) - COST_X(f_2, \mu_2) \right)\\
\end{multline}

\begin{equation}
\leq \frac{6\eta}{12} + \frac{\eta}{2} \leq \eta
\end{equation}

\bibliographystyle{apalike}
\bibliography{uai}

\begin{thebibliography}{}

\bibitem[Ackerman et~al., 2010]{ackerman2010towards}
Ackerman, M., Ben-David, S., and Loker, D. (2010).
\newblock Towards property-based classification of clustering paradigms.
\newblock In {\em Advances in Neural Information Processing Systems}, pages
  10--18.

\bibitem[Alipanahi et~al., 2008]{alipanahi2008distance}
Alipanahi, B., Biggs, M., Ghodsi, A., et~al. (2008).
\newblock Distance metric learning vs. fisher discriminant analysis.
\newblock In {\em Proceedings of the 23rd national conference on Artificial
  intelligence}, pages 598--603.

\bibitem[Anthony and Bartlett, 2009]{anthony2009neural}
Anthony, M. and Bartlett, P.~L. (2009).
\newblock {\em Neural network learning: Theoretical foundations}.
\newblock cambridge university press.

\bibitem[Balcan et~al., 2009]{balcan2009approximate}
Balcan, M.-F., Blum, A., and Gupta, A. (2009).
\newblock Approximate clustering without the approximation.
\newblock In {\em Proceedings of the twentieth Annual ACM-SIAM Symposium on
  Discrete Algorithms}, pages 1068--1077. Society for Industrial and Applied
  Mathematics.

\bibitem[Basu et~al., 2002]{basu2002semi}
Basu, S., Banerjee, A., and Mooney, R. (2002).
\newblock Semi-supervised clustering by seeding.
\newblock In {\em In Proceedings of 19th International Conference on Machine
  Learning (ICML-2002}.

\bibitem[Basu et~al., 2004]{basu2004probabilistic}
Basu, S., Bilenko, M., and Mooney, R.~J. (2004).
\newblock A probabilistic framework for semi-supervised clustering.
\newblock In {\em Proceedings of the tenth ACM SIGKDD international conference
  on Knowledge discovery and data mining}, pages 59--68. ACM.

\bibitem[Basu et~al., 2008]{basu2008constrained}
Basu, S., Davidson, I., and Wagstaff, K. (2008).
\newblock {\em Constrained clustering: Advances in algorithms, theory, and
  applications}.
\newblock CRC Press.

\bibitem[Ben-David, 2007]{ben2007framework}
Ben-David, S. (2007).
\newblock A framework for statistical clustering with constant time
  approximation algorithms for k-median and k-means clustering.
\newblock {\em Machine Learning}, 66(2-3):243--257.

\bibitem[Biau et~al., 2008]{biau2008performance}
Biau, G., Devroye, L., and Lugosi, G. (2008).
\newblock On the performance of clustering in hilbert spaces.
\newblock {\em Information Theory, IEEE Transactions on}, 54(2):781--790.

\bibitem[Bilenko et~al., 2004]{bilenko2004integrating}
Bilenko, M., Basu, S., and Mooney, R.~J. (2004).
\newblock Integrating constraints and metric learning in semi-supervised
  clustering.
\newblock In {\em Proceedings of the twenty-first international conference on
  Machine learning}, page~11. ACM.

\bibitem[Blum, 2014]{blum2014Approximation}
Blum, A. (2014).
\newblock Approximation-stability and perturbation-stability.
\newblock In {\em DAGSTUHL Workshop on Analysis of Algorithms Beyond the Worst
  Case}.

\bibitem[Demiriz et~al., 1999]{demiriz1999semi}
Demiriz, A., Bennett, K.~P., and Embrechts, M.~J. (1999).
\newblock Semi-supervised clustering using genetic algorithms.
\newblock {\em Artificial neural networks in engineering (ANNIE-99)}, pages
  809--814.

\bibitem[Kulis et~al., 2009]{kulis2009semi}
Kulis, B., Basu, S., Dhillon, I., and Mooney, R. (2009).
\newblock Semi-supervised graph clustering: a kernel approach.
\newblock {\em Machine learning}, 74(1):1--22.

\bibitem[Law et~al., 2005]{law2005model}
Law, M.~H., Topchy, A.~P., and Jain, A.~K. (2005).
\newblock Model-based clustering with probabilistic constraints.
\newblock In {\em SDM}. SIAM.

\bibitem[Maurer and Pontil, 2010]{maurer2010dimensional}
Maurer, A. and Pontil, M. (2010).
\newblock k-dimensional coding schemes in hilbert spaces.
\newblock {\em Information Theory, IEEE Transactions on}, 56(11):5839--5846.

\bibitem[Pollard, 1984]{pollard1984convergence}
Pollard, D. (1984).
\newblock {\em Convergence of stochastic processes}.
\newblock David Pollard.

\bibitem[Valiant, 1984]{valiant1984theory}
Valiant, L.~G. (1984).
\newblock A theory of the learnable.
\newblock {\em Communications of the ACM}, 27(11):1134--1142.

\bibitem[Von~Luxburg and Ben-David, 2005]{von2005towards}
Von~Luxburg, U. and Ben-David, S. (2005).
\newblock Towards a statistical theory of clustering.
\newblock In {\em Pascal workshop on statistics and optimization of
  clustering}, pages 20--26.

\bibitem[Wagstaff et~al., 2001]{wagstaff2001constrained}
Wagstaff, K., Cardie, C., Rogers, S., Schr{\"o}dl, S., et~al. (2001).
\newblock Constrained k-means clustering with background knowledge.
\newblock In {\em ICML}, volume~1, pages 577--584.

\bibitem[Xing et~al., 2002]{xing2002distance}
Xing, E.~P., Jordan, M.~I., Russell, S., and Ng, A.~Y. (2002).
\newblock Distance metric learning with application to clustering with
  side-information.
\newblock In {\em Advances in neural information processing systems}, pages
  505--512.

\end{thebibliography}

\end{document}